\theoremstyle{plain}
\newtheorem{theorem}{Theorem}[section]
\theoremstyle{definition}
\theoremstyle{remark}
\icmltitlerunning{A Forget-and-Grow Strategy for Deep Reinforcement Learning Scaling in Continuous Control}
\newcommand{\ours}{FoG}
\begin{document}

\twocolumn[
\icmltitle{A Forget-and-Grow Strategy for Deep Reinforcement Learning Scaling in Continuous Control}



\icmlsetsymbol{equal}{*}

\begin{icmlauthorlist}
\icmlauthor{Zilin Kang}{equal,qizhi,thucs}
\icmlauthor{Chenyuan Hu}{equal,thuiiis}
\icmlauthor{Yu Luo}{huawei}
\icmlauthor{Zhecheng Yuan}{thuiiis,qizhi,ailab}
\icmlauthor{Ruijie Zheng}{umd}
\icmlauthor{Huazhe Xu}{thuiiis,qizhi,ailab}
\end{icmlauthorlist}

\icmlaffiliation{umd}{Computer Science, University of Maryland}
\icmlaffiliation{qizhi}{Shanghai Qi Zhi Institute}
\icmlaffiliation{thucs}{Department of Computer Science and Technology, Tsinghua University}
\icmlaffiliation{thuiiis}{Institute for Interdisciplinary Information Sciences, Tsinghua University}
\icmlaffiliation{huawei}{Huawei Noah's Ark Lab}
\icmlaffiliation{ailab}{Shanghai Artificial Intelligence Laboratory}

\icmlcorrespondingauthor{Zilin Kang}{kzl22@mails.tsinghua.edu.cn}
\icmlcorrespondingauthor{Chenyuan Hu}{cy-hu22@mails.tsinghua.edu.cn}
\icmlcorrespondingauthor{Huazhe Xu}{huazhe\_xu@mail.tsinghua.edu.cn}

\icmlkeywords{Machine Learning, ICML}

\vskip 0.3in
]



\printAffiliationsAndNotice{\icmlEqualContribution} 

\begin{abstract}
Deep reinforcement learning for continuous control has recently achieved impressive progress. 
However, existing methods often suffer from primacy bias—a tendency to overfit early experiences stored in the replay buffer—which limits an RL agent’s sample efficiency and generalizability. 
%
%
In contrast, humans are less susceptible to such bias, partly due to \emph{infantile amnesia}, where the formation of new neurons disrupts early memory traces, leading to the forgetting of initial experiences \cite{akers2014hippocampal}. 
%
%
Inspired by this dual processes of forgetting and growing in neuroscience, in this paper, we propose \emph{Forget and Grow}~(\textbf{\ours}), a new deep RL algorithm with two mechanisms introduced.
First, \emph{Experience Replay Decay (ER Decay)}—``forgetting early experience''—which balances memory by gradually reducing the influence of early experiences. 
Second, \emph{Network Expansion}—``growing neural capacity''—which enhances agents' capability to exploit the patterns of existing data by dynamically adding new parameters during training.
Empirical results on four major continuous control benchmarks with more than 40 tasks demonstrate the superior performance of \ours~ against SoTA existing deep RL algorithms, including BRO, SimBa and TD-MPC2.
%

\end{abstract}
\vspace{0pt}

\begin{figure}[p]
    \begin{center}
        \centerline{\includegraphics[width=\columnwidth]{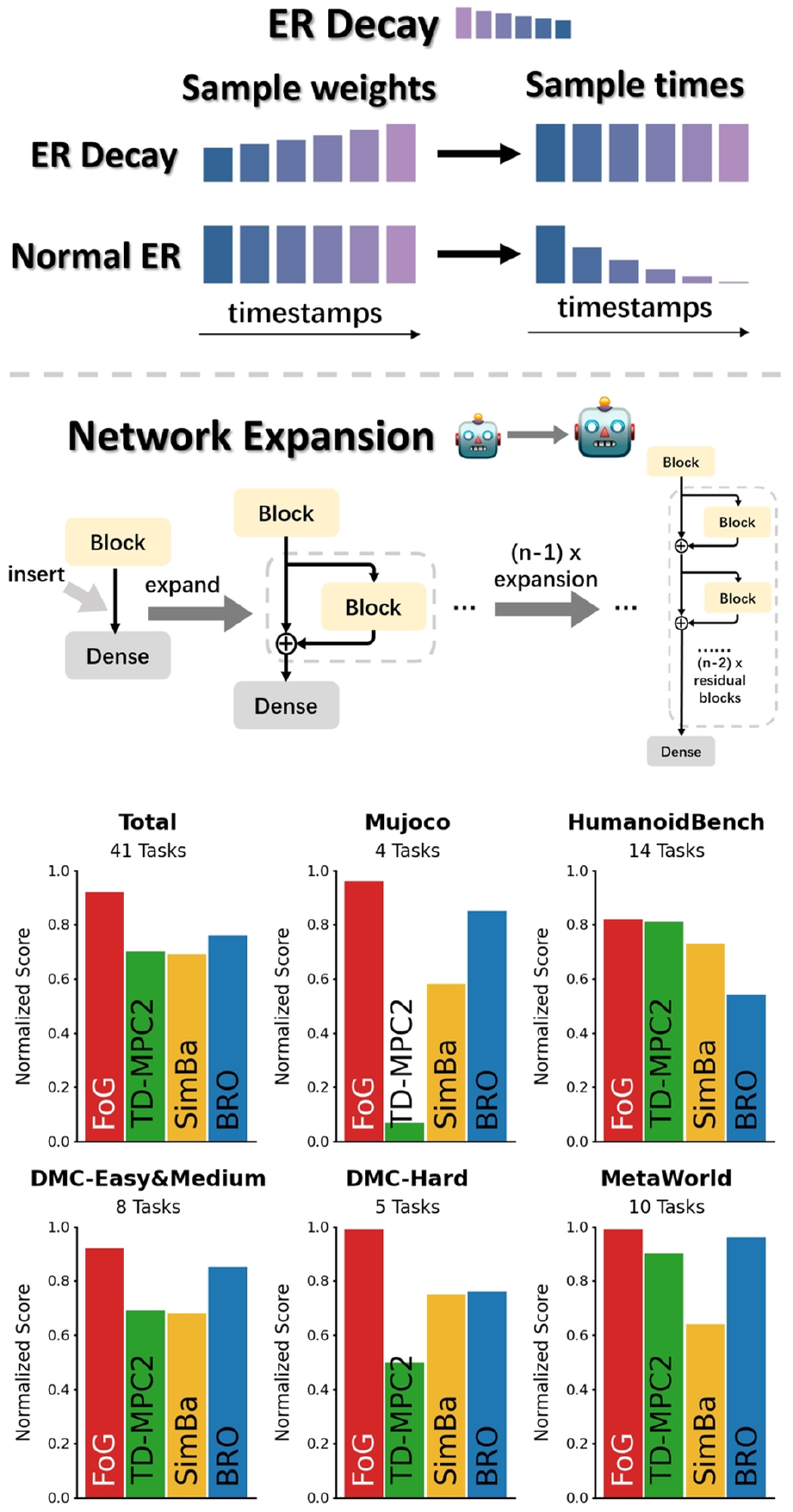}}\label{demo}
    \end{center}
    \vskip -0.3in
    \caption{\textbf{Overview.} \textbf{Top:} we illustrate two key components of our strategy: ER Decay and Network Expansion. \textbf{Bottom:} comparison of normalized score. FoG outperforms popular model-based and model-free methods including TD-MPC2, SimBa and BRO.}
    \vskip -0.2in
\end{figure}

\section{Introduction}

Do humans remember learning how to speak or walk? For the majority, the answer is no. This phenomenon, known as \textbf{infantile amnesia} in neural science~\cite{josselyn2012infantile}, occurs because the hippocampus generates a large number of new neurons during infancy, which disrupts existing memory traces and leads to forgetting~\cite{alberini2017infantile}. Observed in humans and other mammals, this phenomenon plays a critical role in the development of memory and learning abilities.

\begin{figure}[htb]
    \vskip 0.2in
    \begin{center}
        \centerline{\includegraphics[width=\columnwidth]{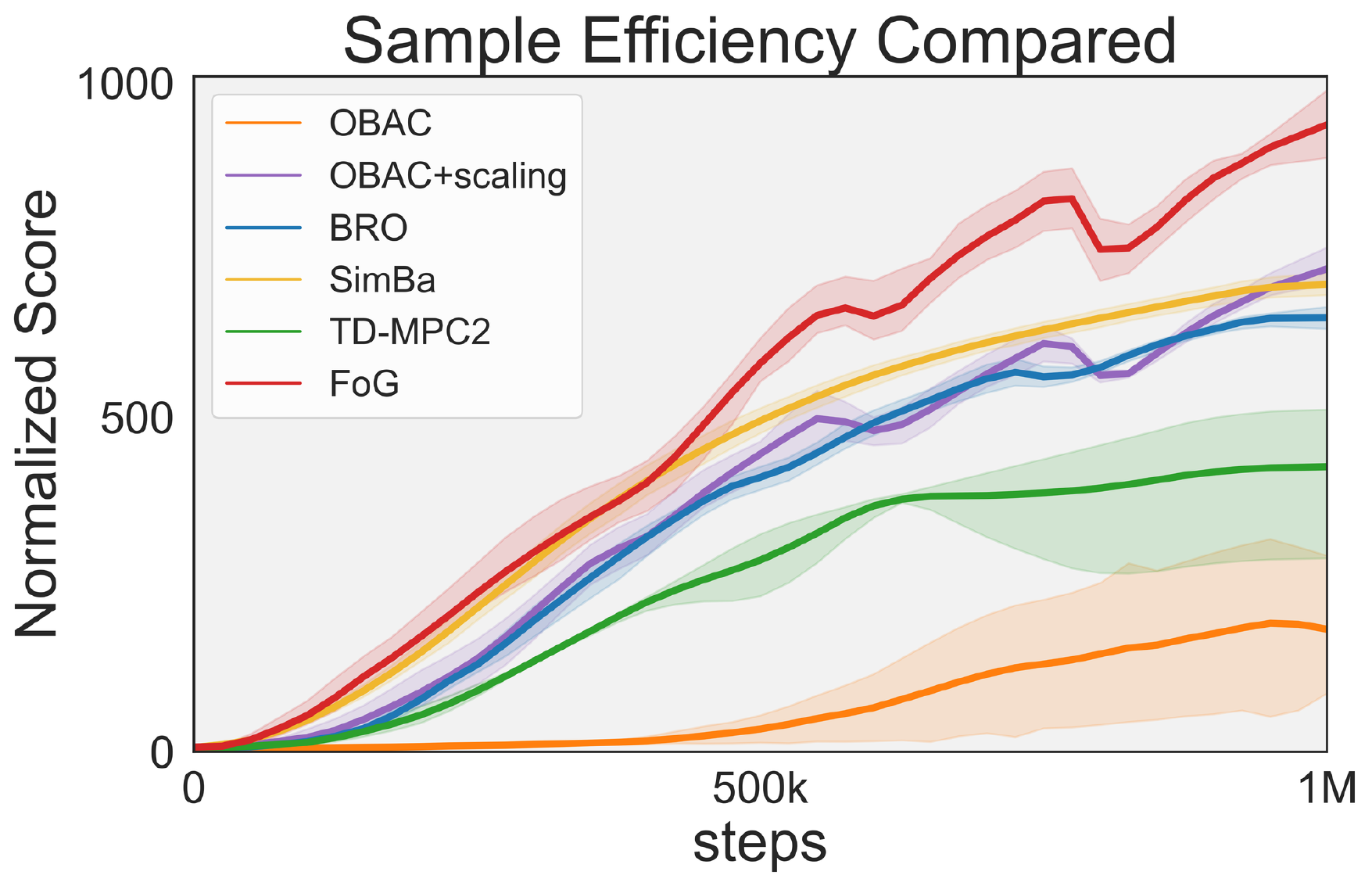}}
        \caption{Normalized scores of algorithms on \textbf{DMC-Hard tasks (5 hardest Dog \& Humanoid tasks)}. The performance of the OBAC+scaling method is comparable to that of SimBa and BRO, but when combined with ~\ours, it achieves superior results.
        }\label{fig:efficiency}
    \end{center}
    \vskip -0.2in
\end{figure}

However, in the field of deep reinforcement learning, agents typically do not have a natural mechanism to forget their early training experiences. Instead, they often overfit to initial data, which is often described as primacy bias~\cite{nikishin2022primacy, qiao2023primacy}. In particular, deep RL methods that rely heavily on experience replay~\cite{mnih2015human, fedus2020revisiting} with high replay ratios~\cite{d2022sample} tend to repeatedly revisit old transitions, reinforcing patterns formed in the early stage of training.

To address this issue, previous works~\cite{nikishin2022primacy, qiao2023primacy} have introduced a reset mechanism that periodically resets part of the policy's network parameters. 
While resets partially reduce primacy bias, the older samples are still replayed more frequently than the newer ones. Thus, this imbalance could still lead to overfitting on the old experience, harming the overall performance.

To draw a parallel, infantile amnesia involves two key aspects—\textit{forget} and \textit{grow}. During infancy, the brain generates a large number of new neurons, which not only disrupt existing memory traces and lead to forgetting but also provide the capacity for reorganizing and forming new structures critical for memory and learning. 
This phenomenon inspires our question: \textit{can reinforcement learning agents follow a similar process, combining forgetting and growth, to mitigate primacy bias and improve performance?} 

Our answer is \textit{affirmative}, with two novel methods: \textbf{Experience Replay Decay (ER Decay)} and \textbf{Network Expansion}, both of which are simple and efficient. \textbf{ER Decay} reduces the sampling probability of older data in the replay buffer, effectively allowing the agent to “forget” outdated transitions in a way analogous to the memory disruption observed in infantile amnesia. Meanwhile, \textbf{Network Expansion} introduces new neurons to the model early in training, providing fresh capacity to adapt and reorganize, much like the growth of new neurons in infancy.

Together, these methods are directly inspired by the “forget and grow” mechanism observed in infantile amnesia, and they work in tandem to suppress primacy bias and enhance overall performance.

In this work, we give theory-based intuition on how does such a “forgetting plus growth” mechanism work, and we provide a thorough empirical investigation of its effectiveness. We also propose a new algorithm Forget-and-Grow~(\textbf{\ours}), which integrates these methods into the OBAC algorithm~\cite{luo2024offline} and further boosts performance using scaled networks and replay ratios. Our approach achieved highly competitive results across more than 40 environments in several benchmarks, including Mujoco~\cite{todorov2012mujoco}, DMControl~\cite{tassa2018deepmind}, Meta-World~\cite{yu2020meta},
and HumanoidBench~\cite{sferrazza2024humanoidbenchsimulatedhumanoidbenchmark}
surpassing popular methods including SimBa~\cite{lee2024simba}, BRO~\cite{hansen2024tdmpc2scalablerobustworld} and TD-MPC2~\cite{hansen2024tdmpc2scalablerobustworld} in multiple settings.

To summarize, the contributions of this paper are three-fold:
\begin{enumerate}
    \item We show empirically that reset mechanisms alone cannot fully resolve the primacy bias issue.
    \item We introduce two strategies: ER Decay and Network Expansion, demonstrating their effectiveness in mitigating primacy bias.
    \item Develop a new deep continuous control algorithm \textbf{\ours}, achieving state-of-the-art performance across several benchmarks.
\end{enumerate}
\section{Related Works} 
\paragraph{Off-policy RL.}

Off-policy reinforcement learning is a frequently used paradigm where agents learn policies from data generated by previous policies \citep{mnih2015human, NIPS2016_c3992e9a, prudencio2023survey,ma2024learning}, allowing for more efficient use of prior experiences. Due to the advantage of improving sample efficiency, it is widely used in scenarios where collecting on-policy data is costly or risky. Many approaches focus on real-world application \citep{delarue2020reinforcementlearningcombinatorialactions, 9536670}  and algorithmic improvements such as reducing bias in Q-value estimation \citep{fujimoto2018addressing,lan2021maxminqlearningcontrollingestimation}, better utilizing offline datasets \citep{fujimoto2019offpolicydeepreinforcementlearning, schaul2016prioritizedexperiencereplay}, and integration with other paradigms \citep{luo2024offline,10463525}. 

\paragraph{Primacy bias.}
The concept of \textit{primacy bias} in deep reinforcement learning (RL) refers to the overfitting of policies to earlier experiences when training on progressively growing datasets, which can negatively impact the following learning process~\citep{pmlr-v162-nikishin22a}. This phenomenon is particularly problematic under high replay ratios, where policies overfit to out-of-distribution data from past experiences, as noted by \citet{li2023efficientdeepreinforcementlearning, lyu2023offpolicyrlalgorithmssampleefficient}.
One straightforward approach to mitigate primacy bias is re-initializing the network to restore plasticity, as explored by ~\citet{nikishin2022primacy, ma2023revisiting,nauman2024bigger}. There are other methods to alleviate the problem
including model ensembles \citep{chen2021randomizedensembleddoubleqlearning}, regularization \citep{kumar2023maintainingplasticitycontinuallearning}, \textit{plasticity injection} \citep{nikishin2023deepreinforcementlearningplasticity}, and \textit{ReDo} \citep{sokar2023dormant}. These approaches aim to balance stability and adaptability, reducing the impact of primacy bias and enhancing overall learning performance

\paragraph{Experience replay.}
To better utilize the previous experience and improve sample efficiency, ~\citet{lin1992self} propose the concept of experience replay, which revisits transitions in the replay buffer with a uniform sampling strategy to update the agent. Following \citet{lin1992self}, Prioritized Experience
Replay~\citep{schaul2016prioritizedexperiencereplay} measures the priority of transitions according to the magnitude of their temporal-difference (TD) error so that the agent can focus on transitions that are more important to improve sample efficiency. ~\citet{andrychowicz2018hindsightexperiencereplay} introduces Hindsight Experience Replay (HER) which incorporates a set of additional goals into each trajectory to avoid complicated reward engineering. \citet{zhang2018deeperlookexperiencereplay} proposes Combined Experience Replay (CER) that adds the latest transition to the batch and uses the corrected batch to train the agent. Corrected Uniform Experience Replay (CUER)~\cite{yenicesu2024cuercorrecteduniformexperience} also adopts the idea of balancing the sampling of the transitions in the replay buffer to make the sampling distribution more uniform considering the whole training process.

\paragraph{Model capacity improvement in RL.}
The most straightforward way to improve model capacity is model size scaling ~\cite{hestness2017deeplearningscalingpredictable}. However, in RL, naive scaling can lead to instability or degraded performance ~\cite{vanhasselt2018deepreinforcementlearningdeadly, sinha2020d2rl, bjorck2022deeperdeepreinforcementlearning}.
High-capacity models have shown effectiveness in offline RL ~\cite{kumar2023offlineqlearningdiversemultitask, lee2022multigamedecisiontransformers} and model-based RL ~\cite{hafner2024masteringdiversedomainsworld, hansen2024tdmpc2scalablerobustworld, hamrick2021roleplanningmodelbaseddeep}. As for off-policy
RL, model size scaling has exhibited advantages for both discrete action representation~\cite{schwarzer2023biggerbetterfasterhumanlevel} and continuous control ~\cite{nauman2024bigger}.

Besides scaling, internal structural changes, such as activation functions and normalization, also improve capacity. For example, TD-MPC2 ~\cite{hansen2024tdmpc2scalablerobustworld} enhanced model performance by incorporating LayerNorm to stabilize gradients, Mish as a smoother activation function, and SimNorm to maintain stable updates across layers. Similarly, ~\citet{nauman2024overestimationoverfittingplasticityactorcritic} demonstrated that LayerNorm and residual connections significantly enhance performance, while SimBa~\cite{lee2024simba} used running statistics normalization, residual feedforward blocks, and post-Layer normalization to address simplicity bias. These modifications highlight that structural improvements, alongside careful scaling, are crucial for leveraging high-capacity models effectively in RL.

\section{Method}\label{sec:method}
\begin{figure*}[t]
    \vskip 0.2in
    \centering
    \includegraphics[width=0.98\textwidth]{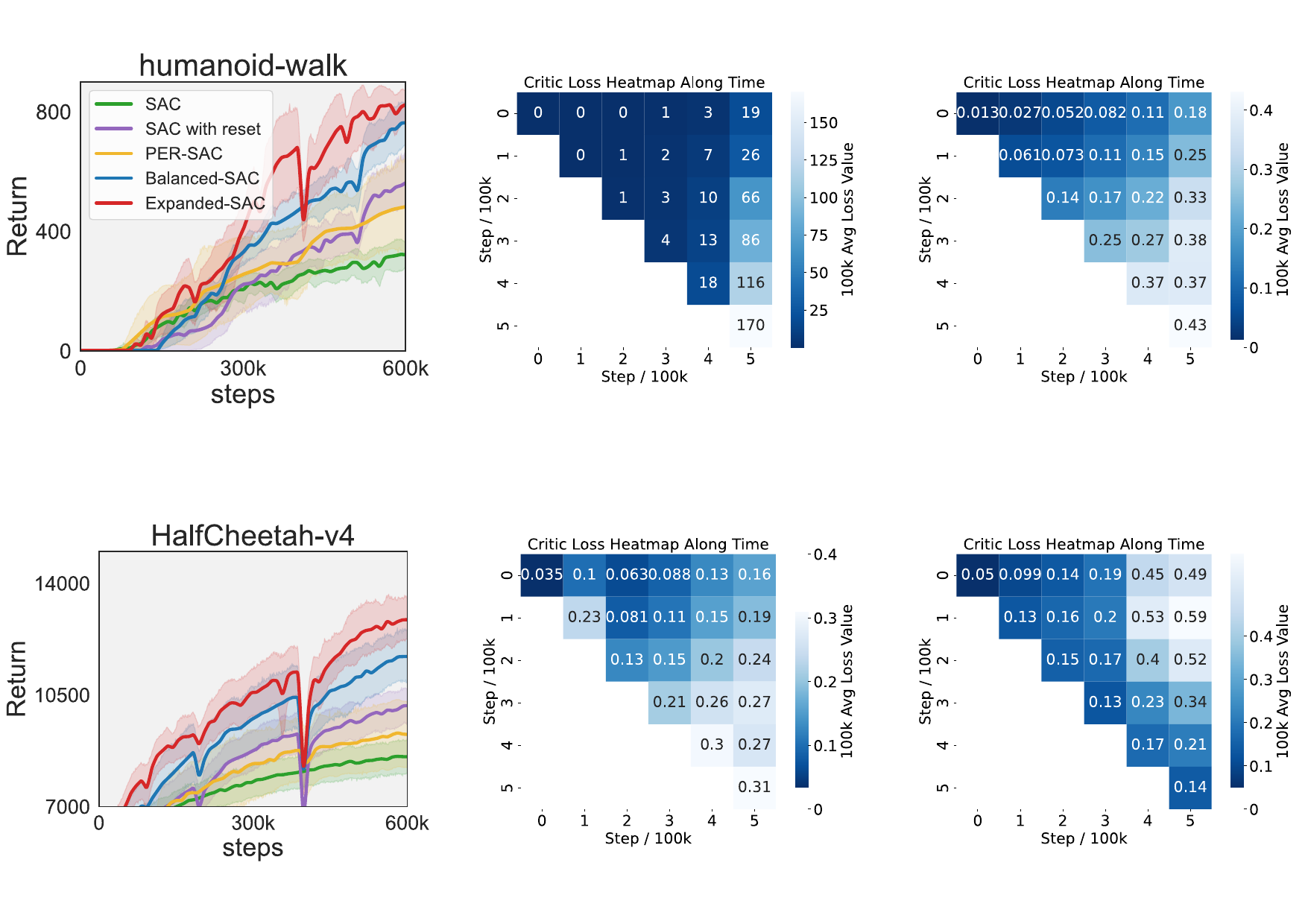}
    \caption{
        \textbf{Learning Curves and Heat Maps of SAC Variants.} 
        \textbf{Top Left:} Return curves of various SAC variants in \textit{humanoid-walk}. 
        \textbf{Top Center:} Critic loss heatmap of Normal SAC in \textit{humanoid-walk}. 
        \textbf{Top Right:} Critic loss heatmap of SAC with reset in the humanoid-walk environment. 
        \textbf{Bottom Left:} Return curves of various SAC variants in the \textit{HalfCheetah-v4}. 
        \textbf{Bottom Center:} Critic loss heatmap of Balanced SAC in \textit{humanoid-walk}. 
        \textbf{Bottom Right:} Critic loss heatmap of Expanded SAC in \textit{humanoid-walk}. \textbf{About Critic Loss Heatmaps:} Every 100k steps, we measure critic loss over the entire buffer and average the loss every 100k steps to get critic loss heatmaps.
        The darker the color near the diagonal, the less influenced by primacy bias; conversely, the darker the color towards the top-right corner indicates greater influence from primacy bias.
    }
    \vskip -0.2in
    \label{fig:motivating_example}
\end{figure*}
\subsection{A Motivating Example}
Primacy bias has been studied in previous works~\cite{nikishin2022primacy, qiao2023primacy}, and a common approach to mitigate it is to adopt a reset strategy. However, even if an agent resets multiple times during training, experience replay remains imbalanced: older transitions dominate the sampling process. The following theorem formalizes this issue:

\begin{theorem}
\label{thm:older_sampling}
Given a uniformly sampled replay buffer $\mathcal{D}$ that stores $N$ sequentially added transitions $\{\kappa_1, \kappa_2, \cdots, \kappa_N\}$, the earliest transition is sampled $\Omega(\beta \log N)$ times in expectation, where $\beta$ (the product of replay ratio and batch size) is a constant.

More specifically, for any transition $\kappa_t$ (with $t>1$), its expected number of samples $\mathbb{E}[n_t]$ satisfies:
\[
\ln{\frac{N}{t-1}} + \frac{1}{N} - 1 < \frac{\mathbb{E}[n_t]}{\beta} < \ln{\frac{N}{t-1}} + 1 - \frac{1}{t-1}.
\]
\end{theorem}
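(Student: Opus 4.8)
The plan is to reduce the statement to a clean estimate on partial harmonic sums, so the first task is to make the idealized replay dynamics implicit in the theorem explicit. I would model the process as $N$ rounds in which the buffer grows by one transition per environment step, so that at step $s$ it contains exactly $\{\kappa_1,\dots,\kappa_s\}$ and the learner draws $\beta$ samples uniformly from it. Under this model $\kappa_t$ is eligible only at steps $s=t,t+1,\dots,N$, and at step $s$ each draw hits $\kappa_t$ with probability $1/s$. By linearity of expectation (valid whether the $\beta$ draws in a round are taken with or without replacement, since each draw is marginally uniform),
\[
\mathbb{E}[n_t] \;=\; \sum_{s=t}^{N}\beta\cdot\frac{1}{s} \;=\; \beta\,(H_N-H_{t-1}),
\]
where $H_n=\sum_{s=1}^{n}1/s$ is the $n$-th harmonic number. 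Establishing this identity is the crux; everything afterward is purely analytic.

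Next I would prove the two-sided harmonic estimate $\ln n+\tfrac{1}{n}\le H_n\le \ln n+1$ by comparing sums of $1/s$ against integrals of the decreasing function $1/x$. For the upper bound, $H_n=1+\sum_{s=2}^{n}\tfrac{1}{s}\le 1+\int_{1}^{n}\tfrac{dx}{x}=1+\ln n$. For the lower bound, $H_{n-1}=\sum_{s=1}^{n-1}\tfrac{1}{s}\ge \int_{1}^{n}\tfrac{dx}{x}=\ln n$, whence $H_n=H_{n-1}+\tfrac{1}{n}\ge \ln n+\tfrac{1}{n}$. These are the bounds whose constants are tailored to reproduce the exact right-hand sides in the statement.

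I would then combine them, taking care that each direction uses the correct one-sided bound: the upper bound on $\mathbb{E}[n_t]/\beta$ pairs the upper bound on $H_N$ with the lower bound on $H_{t-1}$, while the lower bound pairs the lower bound on $H_N$ with the upper bound on $H_{t-1}$. This gives
\[
\Bigl(\ln N+\tfrac{1}{N}\Bigr)-\Bigl(\ln(t-1)+1\Bigr) \;<\; H_N-H_{t-1} \;<\; \Bigl(\ln N+1\Bigr)-\Bigl(\ln(t-1)+\tfrac{1}{t-1}\Bigr),
\]
and rewriting $\ln N-\ln(t-1)=\ln\frac{N}{t-1}$ on both ends yields precisely $\ln\frac{N}{t-1}+\frac{1}{N}-1<\mathbb{E}[n_t]/\beta<\ln\frac{N}{t-1}+1-\frac{1}{t-1}$ (the inequalities are strict for $N\ge 2$, since the integral comparisons are strict for the non-degenerate terms). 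The $t>1$ hypothesis is exactly what keeps $H_{t-1}$ and $\ln(t-1)$ well defined. For the earliest transition I would specialize to $t=1$, where $\mathbb{E}[n_1]=\beta H_N\ge \beta\ln N=\Omega(\beta\log N)$, giving the leading claim.

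The main obstacle is not the analysis but the modeling and bookkeeping around the identity $\mathbb{E}[n_t]=\beta(H_N-H_{t-1})$: one must justify the idealization that a single transition enters per step with no eviction over the horizon, that $\beta=\text{(replay ratio)}\times\text{(batch size)}$ behaves as a fixed per-step sample count, and in particular track the off-by-one that makes $H_{t-1}$ (hence $\ln\frac{N}{t-1}$ rather than $\ln\frac{N}{t}$) appear, stemming from $\kappa_t$ first becoming sampleable when the buffer has size $t$. The secondary care point is choosing harmonic bounds whose additive constants match the stated $+\tfrac{1}{N}-1$ and $+1-\tfrac{1}{t-1}$ exactly, rather than the sharper but mismatched bounds one gets from a direct $\int_{t-1}^{N}dx/x$ comparison.
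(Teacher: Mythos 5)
Your proposal is correct and follows essentially the same route as the paper's proof: compute $\mathbb{E}[n_t]=\beta(H_N-H_{t-1})$, apply the two-sided harmonic bound $\ln n+\tfrac{1}{n}<H_n<\ln n+1$ with the appropriate pairing of directions, and specialize to $t=1$ for the $\Omega(\beta\log N)$ claim. The only difference is that you spell out the sampling model and derive the harmonic inequalities by integral comparison, whereas the paper asserts them directly.
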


\begin{proof}
See Appendix~\ref{sec:proofs}.
\end{proof}

This result indicates that older transitions are sampled considerably more often, regardless of how frequently resets happen. In fact, shorter reset intervals often lead to higher replay ratios and can even exacerbate primacy bias. To verify this, we train four agents on \textit{humanoid-walk} and \textit{HalfCheetah-v4} with a replay ratio of 10 and batch size of 256 (i.e., $\beta=2560$). To ensure stability of training under such a high replay ratio, we add a layernorm after every dense layer.

\begin{itemize}
    \item \textbf{SAC}: A baseline Soft Actor-Critic agent~\cite{mnih2015human} with uniform replay buffer and no resets.
    \item \textbf{SAC with reset}: SAC that resets at 15k, 50k, 100k, 200k, 400k, 600k, and 800k steps.
    \item \textbf{PER-SAC}: Based on SAC with reset, plus PER (prioritized experience replay)~\cite{schaul2016prioritizedexperiencereplay}.
    \item \textbf{Balanced SAC}: Based on SAC with reset, plus our ER decay mechanism to mitigate primacy bias, with $\epsilon=1e-4$, $\tau=1e-1$.
    \item \textbf{Expanded SAC}: Based on SAC with reset, plus both ER decay and network expansion, which expands critic networks from 3 dense layers to 7 dense layers, at 50k and 200k network iterations after each reset, 2 layers at a time.
\end{itemize}

 We measure the critic loss across the buffer every 100k steps to generate critic loss heatmaps. As shown in \cref{fig:motivating_example}, the critic loss of \textbf{Normal SAC} is significantly higher than that of other SAC variants that incorporate resets, indicating its failure to fit the data after millions of updates. Even \textbf{SAC with reset} experiences a spike in loss in the diagonal areas as training progresses. This suggests that, despite multiple resets, the \textbf{SAC with reset} agent still overfits to early data and fails to adapt effectively to newer transitions. This observation highlights that \textit{resetting alone is insufficient to mitigate primacy bias.}

In contrast, \textbf{Expanded SAC} exhibits a darker diagonal area, indicating that the model is less influenced by primacy bias, with little to no increase in loss over time. Not only does \textbf{Expanded SAC} reduce the loss on recent transitions more effectively than \textbf{SAC with reset}, but it also achieves superior performance across both tasks. Specifically, it delivers \textbf{53\%} and \textbf{27\%} improvements in final scores on \textit{humanoid-walk} and \textit{HalfCheetah-v4}, respectively, compared to \textbf{SAC with reset}. Furthermore, \textbf{Expanded SAC} outperforms recent methods like \textit{SimBa}~\cite{lee2024simba} by \textbf{74\%} and \textbf{22\%}, while maintaining a simpler design.

Although this small-scale experiment is not exhaustive, it underscores the critical role of addressing primacy bias in experience replay and provides strong motivation for our proposed method.

\subsection{Experience Replay Decay and Network Expansion}

\paragraph{Experience replay decay.}
We incorporate a decay factor into experience replay so that the sampling probability of older transitions gradually decreases. This strategy partially “forgets” older samples and mitigates primacy bias.

\begin{theorem}
\label{thm:bounded_sampling}
Let $\mathcal{D}$ be a replay buffer with ER decay $\epsilon$. For any transition $\kappa_i$ in $\mathcal{D}$, the expected number of times it is sampled, $\mathbb{E}[n_i]$, is bounded by a constant $C$.
\end{theorem}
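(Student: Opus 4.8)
The plan is to mirror the computation behind Theorem~\ref{thm:older_sampling}, but to show that ER decay replaces the divergent harmonic series by a convergent geometric one. First I would fix the sampling model induced by ER decay: when the buffer holds $s$ transitions, each transition $\kappa_i$ (with $i \le s$) carries an unnormalized weight that decays in its age $s-i$, which I model as $w_i(s) \propto (1-\epsilon)^{s-i}$, so that the per-draw sampling probability is $p_i(s) = w_i(s)/Z(s)$ with normalizer $Z(s) = \sum_{j=1}^{s} w_j(s)$. Exactly as in Theorem~\ref{thm:older_sampling}, each environment step performs $\beta$ draws (replay ratio times batch size), so the expected total number of times $\kappa_i$ is ever sampled is
\[
\mathbb{E}[n_i] = \beta \sum_{s=i}^{N} p_i(s).
\]

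The crucial step is then a uniform lower bound on the normalizer $Z(s)$. Since the most recently added transition has age $0$ and hence weight $(1-\epsilon)^0 = 1$, we always have $Z(s) \ge 1$, independently of $s$. This yields the pointwise estimate $p_i(s) \le w_i(s) = (1-\epsilon)^{s-i}$. This is precisely the inequality that fails in the uniform case, where $w_j(s)\equiv 1$ forces $Z(s)=s$ and hence $p_i(s)=1/s$, producing the logarithmic growth of Theorem~\ref{thm:older_sampling}.

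With this bound in hand I would substitute and sum the geometric series:
\[
\mathbb{E}[n_i] \le \beta \sum_{s=i}^{N} (1-\epsilon)^{s-i} \le \beta \sum_{k=0}^{\infty} (1-\epsilon)^{k} = \frac{\beta}{\epsilon},
\]
so $C = \beta/\epsilon$ works, and the bound is independent of both $N$ and the index $i$, in sharp contrast to the $\Omega(\beta \log N)$ behaviour of uniform replay. This cleanly formalizes the intuition that ``forgetting'' old experience caps how often any single transition can be revisited.

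The main obstacle I anticipate is not the summation but pinning down the decay model precisely and verifying the uniform lower bound on $Z(s)$. If ER decay is implemented by repeatedly rescaling unnormalized weights while reinserting fresh transitions with weight $1$, one must check that the newest transition's weight stays bounded away from $0$ at every buffer size, so that $p_i(s) \le (1-\epsilon)^{s-i}$ remains valid throughout training. One must also confirm that the decay introduces no nonvanishing floor on old weights (for instance through the parameter $\tau$), since a constant floor $w_i(s)\ge\tau$ would reintroduce a harmonic-type tail $\sum_s \tau/Z(s)$ and break the constant bound. Once the weights are shown to decay geometrically with no such floor, convergence of the geometric series delivers the constant $C$ immediately.
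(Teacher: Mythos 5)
Your proposal is correct and follows essentially the same route as the paper: both arguments bound the normalizer of the decayed weights from below by a constant (your $Z(s)\ge 1$ is exactly the paper's inequality $1-(1-\epsilon)^{t+1}>\epsilon$ after dividing out the geometric-sum formula) and then sum the resulting geometric series to obtain the same constant $C=\beta/\epsilon$; you merely handle general $i$ directly where the paper first reduces to $i=1$. Your closing caveat about the floor $\tau$ is a fair observation, but it does not affect the theorem as stated, which concerns pure exponential decay.
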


\begin{proof}
    See Appendix~\ref{sec:proofs}.
\end{proof}

This result implies that the sampling frequency of older transitions stays within a constant range. However, a purely exponential decay quickly diminishes the sample weight of older transitions, causing them to virtually disappear from the replay buffer. This effectively reduces the buffer size and, in practice, can harm the final performance. Therefore, we set a lower bound for sampling weights:
\[
w_{\{t,i\}} = \max\bigl(\tau, (1-\epsilon)^{\,t-i}\bigr),
\]
where $\epsilon$ is the decay rate, $\tau$ is the minimum weight, and the sampling probability of transition $\kappa_i$ at time $t$ is:
\[
P_{\{t,i\}} = \frac{w_{\{t,i\}}}{\sum_{j=1}^{t} w_{\{t,j\}}}.
\]

\begin{figure}[ht]
    \vskip 0.2in
    \begin{center}
    \centerline{\includegraphics[width=\columnwidth]{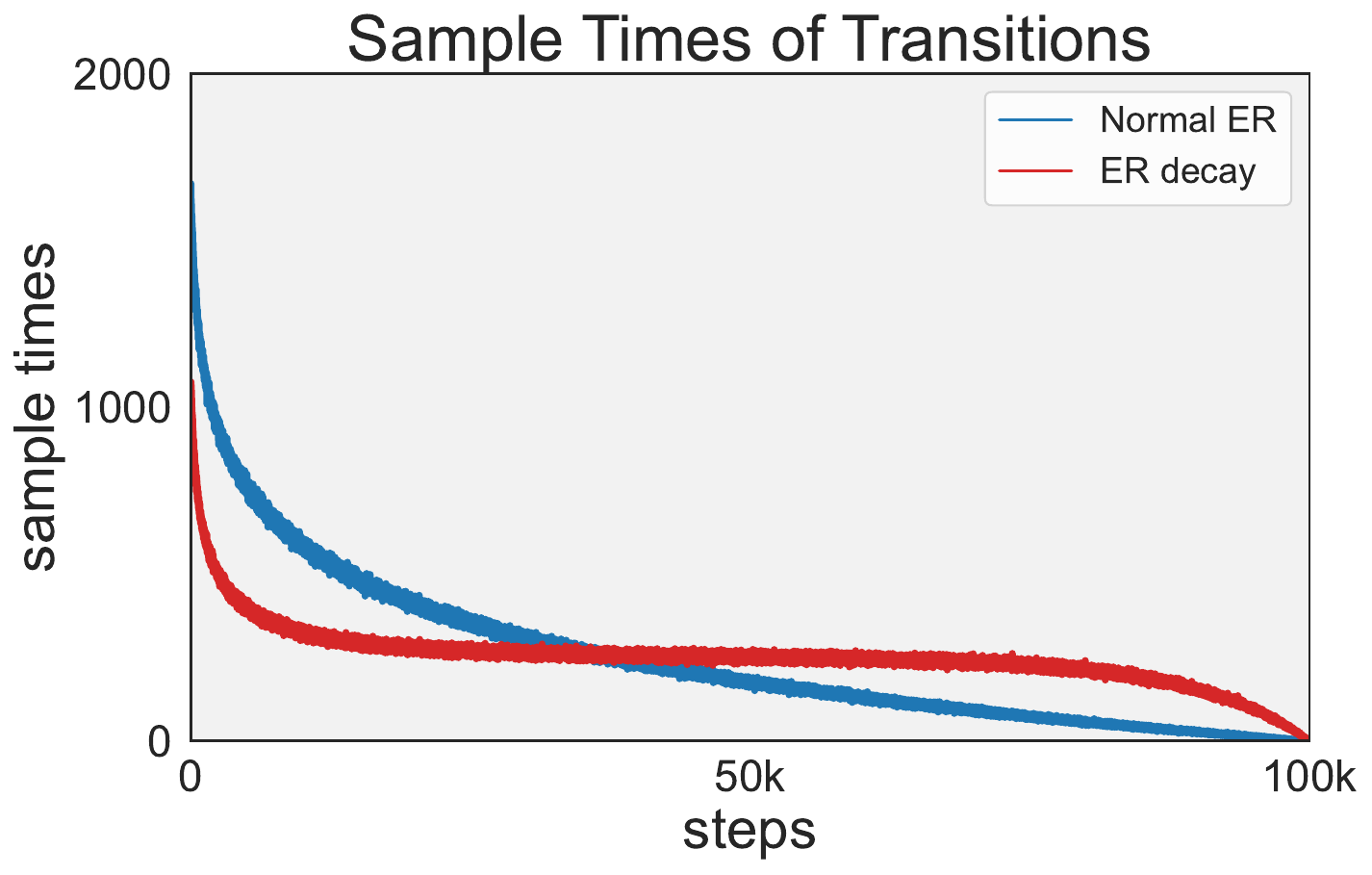}}
    \caption{Sample times of transitions in a normal buffer and a decayed buffer with $\epsilon=1e-4$,$\tau=0.01$ over 100k steps.
    }\label{sample times}
    \end{center}
    \vskip -0.2in
\end{figure}

Simulation (see \cref{sample times}) shows that ER decay can effectively suppress sample times of older transitions and balance the sample times of transitions across a large range of steps.

\begin{figure}[ht]
    \vskip 0.2in
    \begin{center}
    \centerline{\includegraphics[width=\columnwidth]{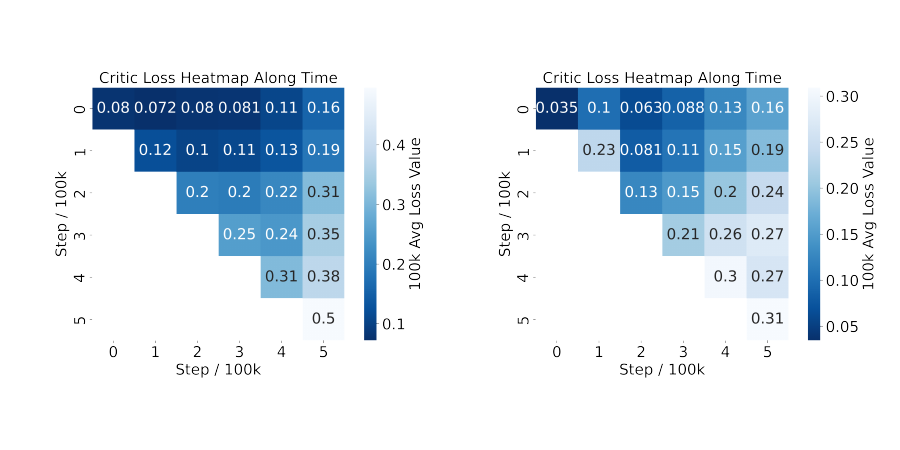}}
    \caption{Changes in critic loss over time for PER(\textbf{left}) and ER decay(\textbf{right}) in \textit{humanoid-walk}. The darker the color near the diagonal, the less influenced by primacy bias; conversely, the darker the color towards the top-right corner indicates greater influence from primacy bias.
    }\label{fig:loss landscape}
    \end{center}
    \vskip -0.2in
\end{figure}

During the experiments, we compared the performance of our ER decay method with Prioritized Experience Replay (PER). PER assigns higher replay weights to transitions with larger TD errors. In the previously discussed motivating example, we analyzed the loss landscape of both PER and ER decay. 
As demonstrated in~\cref{fig:loss landscape}, the value in the diagonal row of ER Decay is generally smaller than those of PER, which indicates that ER Decay may better alleviate primacy bias.
Experimentally, ER decay achieved a significant advantage over PER in the motivating example. This result was further validated through ablation studies conducted on a broader range of scenarios.

\paragraph{Network expansion.}
Early training relies heavily on initial transitions in replay buffer, which often deviate significantly from the final policy distribution. Yet during this stage, the neural network has the highest plasticity. To address this, we propose \textit{network expansion}, inspired by \textit{infantile amnesia} in mammals. The method involves gradually adding new parameters to the critic network early in training (e.g., after each reset) through residual connections. These newly added parameters are not influenced by early transitions, enabling better adaptation to data shifts when combined with ER decay.

\begin{figure}[ht]
    \vskip 0.2in
    \begin{center}
    \centerline{\includegraphics[width=\columnwidth]{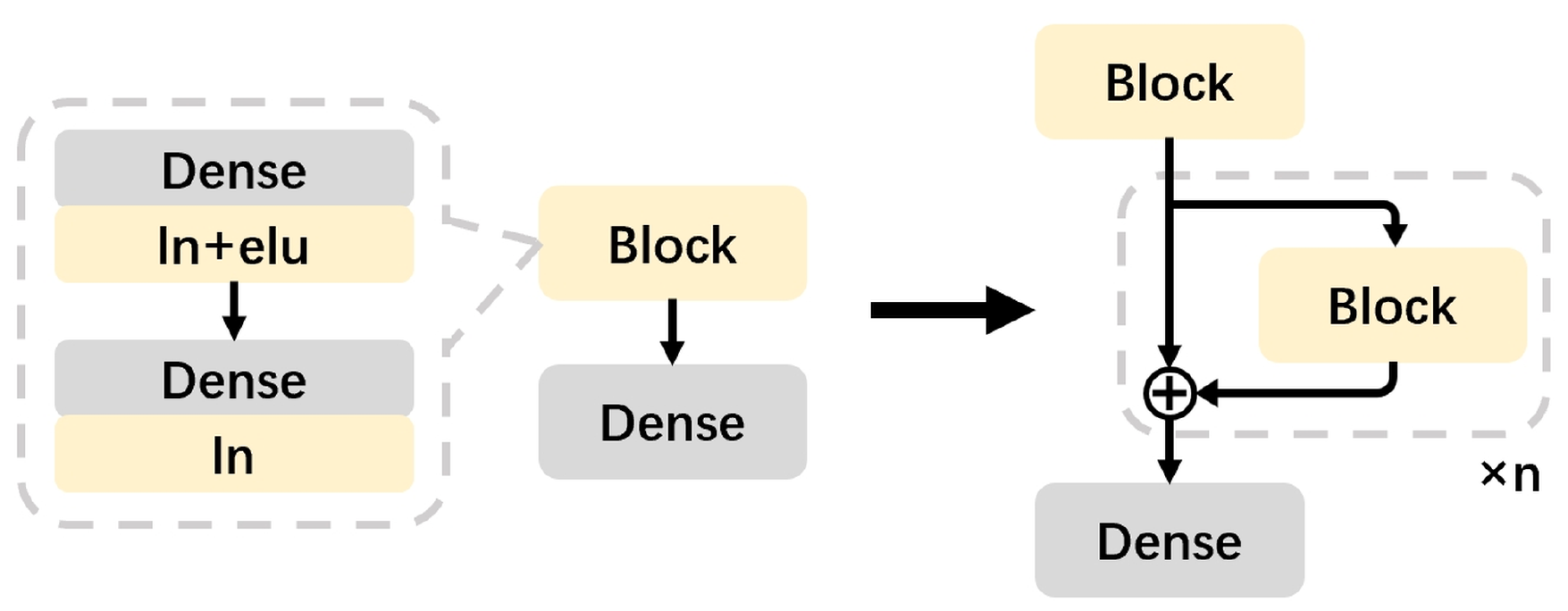}}
    \caption{Network expansion illustration.
    We initialize networks with fewer parameters and progressively add a new block (in the frame) to residual connections at each expansion step.
    }\label{network expansion}
    \end{center}
    \vskip -0.2in
\end{figure}

Our network structure is based on BRO's design, which incorporates layer normalization after each dense layer and residual connections for parameter management. Building on this, we modularized the network into distinct blocks, facilitating the implementation of network expansion. 


Although starting with a large network can provide better performance in the early stages of training, it tends to encounter early convergence issues in the mid-to-late stages. In contrast, \textbf{agents trained with network expansion adapt more effectively to changing training objectives}. This observation is further validated in the \ours \ algorithm. Even when training begins with a network significantly larger than the one used in the motivating example, without network expansion, the agent is highly prone to loss explosions under larger replay ratios. This directly impacts the agent’s stability. Therefore, network expansion is a necessary component in FoG. For more details, refer to\ \cref{sec:experiment}.

A similar idea, called \textit{plasticity injection}~\cite{nikishin2023deepreinforcementlearningplasticity}, also involves network expansion. However, compared to the complex parameter adjustments required by plasticity injection, our approach is simpler to implement, imposing minimal constraints. Our experiments demonstrate its superior performance and efficiency.

\subsection{The Forget and Grow Strategy~(\ours)}
We combine \textbf{ER decay} and \textbf{network expansion} into a unified algorithm, \ours, built on the following key ideas:

\paragraph{OBAC backbone.} 
Our FoG is based on the Offline Boosted Actor-Critic~\cite{luo2024offline}. OBAC enhances online policies using offline data and demonstrates strong performance across various standard benchmarks.

\paragraph{Scaled critic networks and replay ratio.} 
For scaling up the networks, we utilize a larger network for the critic. To increase flexibility, we modularized the network structure, constructing the critic entirely from blocks connected via residual connections. Each block contains two dense layers, each followed by a layer normalization. This modular architecture proves highly compatible with our FoG mechanism.

Additionally, we increase the replay ratio to 10 and introduce a reset list to manage the agent’s reset behavior effectively.

\paragraph{ER decay and network expansion.} 
We introduce ER decay into the replay buffer, where older transitions are assigned lower sampling probabilities, and apply network expansion to the critic networks early in training. Together, these methods embody the concept of forget and grow, allowing the agent to better adapt to shifts in replay data. While simply scaling up OBAC yields performance on par with SimBa or BRO, the forget-and-grow approach is crucial for \ours’s superior performance.

\section{Experiment}\label{sec:experiment}
\begin{figure*}[t]
    \vskip 0.2in
    \begin{center}
    \centerline{\includegraphics[width=0.96\textwidth]{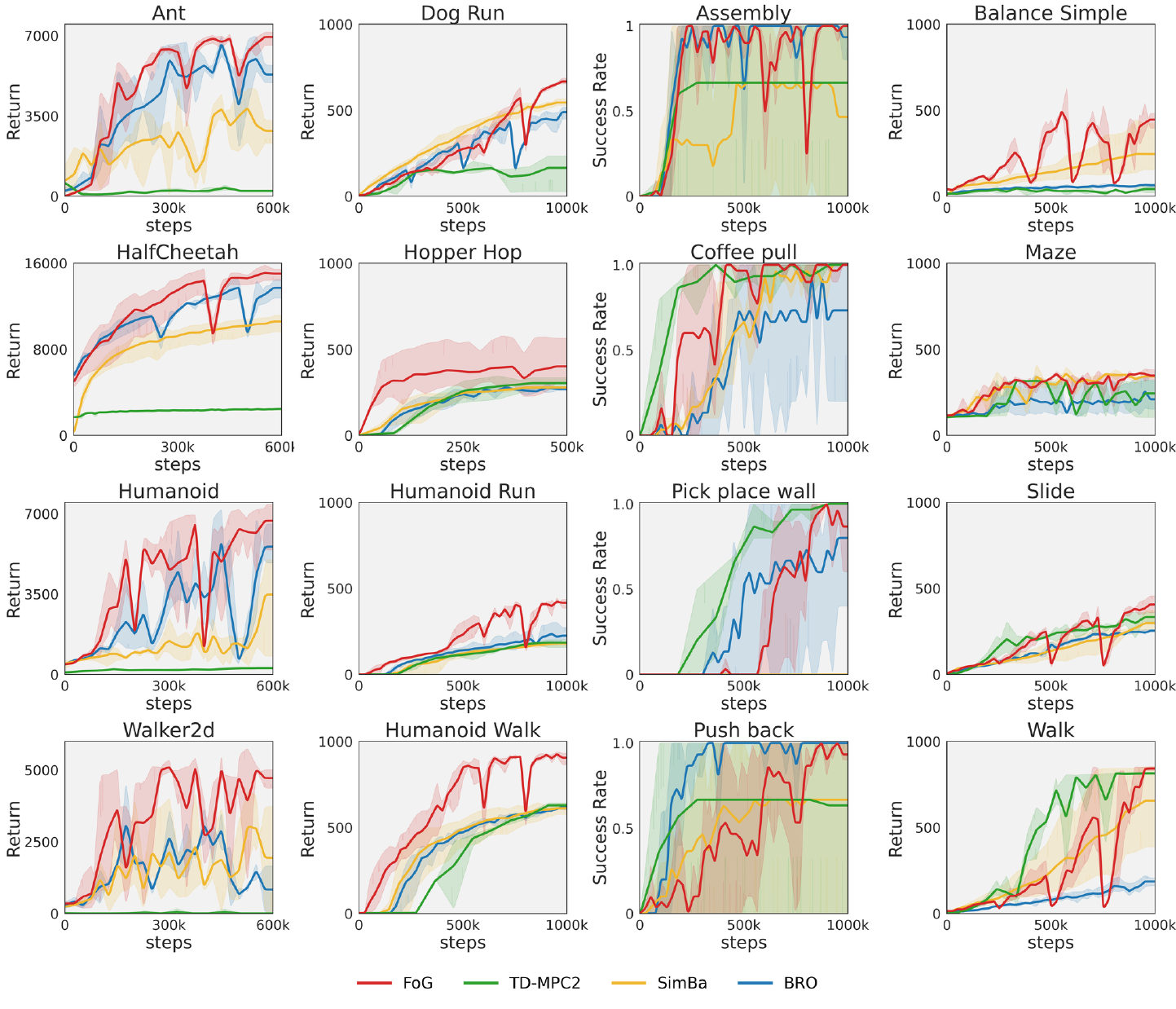}}
    \end{center}
    \vskip -0.15in
    \caption{\textbf{Main results.} We provide performance comparisons for 16 of the 41 tasks, four for each task suite. Please refer to Appendix~\ref{sec:results} for the comprehensive results. The solid lines are the average return/success rate, while the shades indicate 95\% confidence intervals. All algorithms are evaluated with 3 random seeds. }
    \vskip -0.2in
    \label{fig:main_results}
\end{figure*}
\begin{figure*}[t]
    \vskip 0.2in
    \begin{center}
    \centerline{\includegraphics[width=\linewidth]{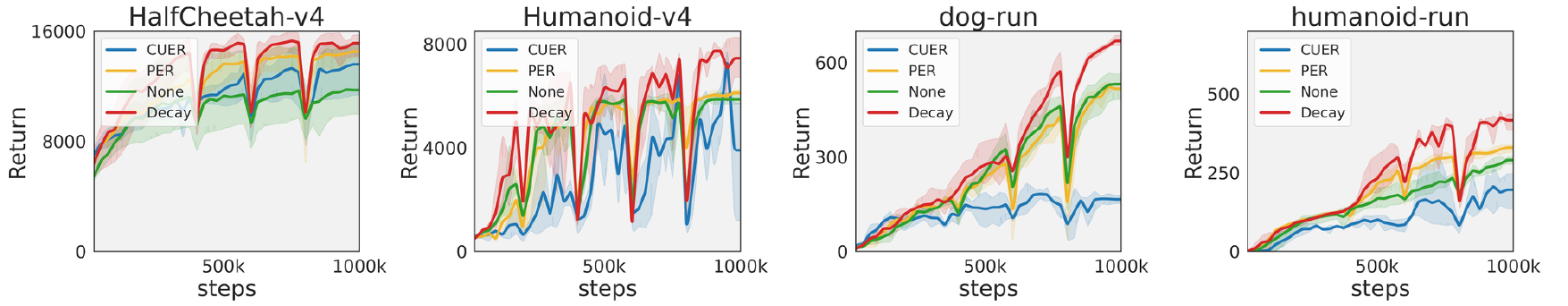}}
    \end{center}
    \vskip -0.2in
    \caption{\textbf{Choice of experience replay methods.} We adopt 4 tasks from Mujoco and DMControl, two for each task suite, to compare different experience replay methods. Mean of 3 runs; shaded areas are 95\% confidence intervals.}
    \vskip -0.15in
    \label{fig:ablation_buffer}
\end{figure*}

To evaluate the performance of \textbf{FoG}, we collect in total \textbf{41} tasks from \textbf{4} domains: Mujoco~\cite{todorov2012mujoco}, DMControl~\cite{tassa2018deepmind}, Meta-World~\cite{yu2020meta}, and HumanoidBench~\cite{sferrazza2024humanoidbenchsimulatedhumanoidbenchmark}, covering a wide range of challenges, including high-dimensional states and actions, sparse
rewards, multi-object and delicate manipulation, and complex locomotion. The implementation details and environment
settings are provided in Appendix~\ref{sec:implementation}.

\paragraph{Baselines.} We compare FoG against 3 state-of-the-art off-policy RL algorithms, including 2 model-free methods and 1 model-based method. Our baselines contain: 1) BRO~\cite{nauman2024bigger}, which scales the critic network of SAC while integrating distributional Q-learning, optimistic exploration, and periodic resets. 2) SimBa~\cite{lee2024simba}, which adopts running statistics normalization, residual feedforward blocks, and post-layer normalization to address simplicity bias. 3) TD-MPC2~\cite{hansen2024tdmpc2scalablerobustworld}, a high-efficient model-based RL method that combines model
predictive control and TD-learning.

\subsection{Experimental Results}
\cref{fig:main_results} presents the learning curves that demonstrate the performance of FoG alongside various baselines across diverse task suites. Overall, we observe that FoG typically outperforms most model-free and model-based baselines across various environments
in terms of exploration efficiency and asymptotic performance. In HumanoidBench, FoG also exhibits comparable capabilities to the best baseline TD-MPC2.

Notably, with identical hyperparameters, FoG achieves consistently high performance across all benchmarks. Other baselines have certain weaknesses in some benchmarks. Due to the task-specific \textit{done} signal setting, TD-MPC2 may perform poorly on Mujoco. SimBa gets lower scores in simple environments with small action dimensions, such as Mujoco and DMC-Easy while BRO performs worse in more complex environments with high action dimensions, such as DMC-Hard and HumanoidBench.

The key takeaway is that with very simple algorithmic changes, including ER decay, network expansion, and simple model structure modifications, FoG successfully overcomes primacy bias and achieves better performance. Notably, previous studies found that model performance saturated when the model parameters reached 5M~\cite{nauman2024bigger}. However, with the help of network expansion, FoG unlocks new levels of favorable model size scaling up to 23M parameters. We will discuss in detail the performance improvements from each modification in the upcoming ablation section.

\subsection{Ablation Studies}
We conduct several ablations to demonstrate the effectiveness
of the design choices of FoG in this section.

\paragraph{Choice of experience replay methods.} 
One of the key designs of our algorithm is ER decay, which gradually decreases the sampling weight of older transitions. To evaluate its effectiveness, we implement other experience replay methods including PER and CUER. For the sake of fairness, we simply replace the experience replay method of FoG while keeping other parts unchanged for comparison. Additionally, we also test the results without using any experience replay method. Results are shown in \cref{fig:ablation_buffer}. We observe that both using PER and ER decay significantly improve the final performance and convergence speed, with ER decay showing the best results. However, the improvement with CUER is marginal. In certain cases, CUER provides no improvement. 

\begin{figure*}[ht]
    \vskip 0.2in
    \begin{center}
    \centerline{\includegraphics[width=\linewidth]{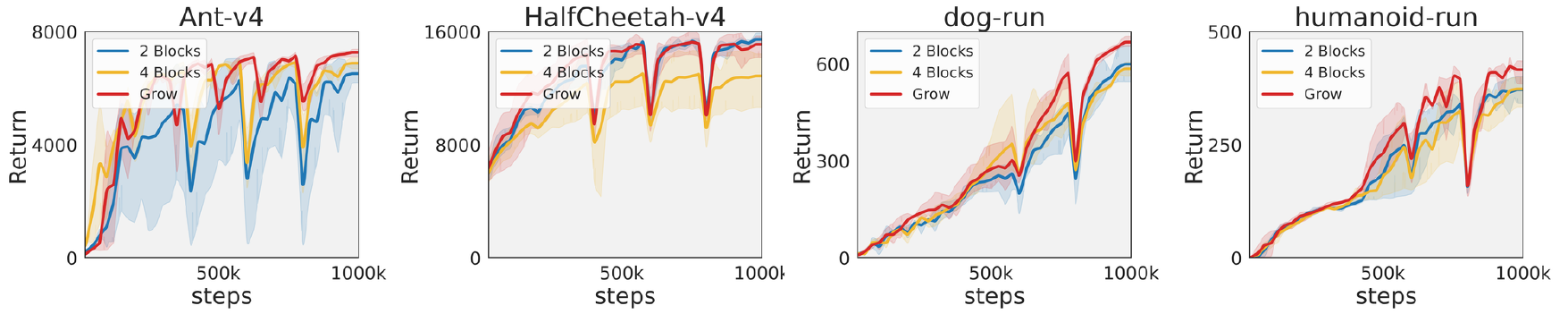}}
    \end{center}
    \vskip -0.15in
    \caption{\textbf{Ablation on network expansion.} We adopt 4 tasks from Mujoco and DMControl, two for each task suite, to showcase the necessity of network expansion. Mean of 3 runs; shaded areas are 95\% confidence intervals.}
    \vskip -0.2in
    \label{fig:ablation_grow}
\end{figure*}

\begin{figure}[ht]
    \vskip 0.2in
    \begin{center}
    \centerline{\includegraphics[width=\columnwidth]{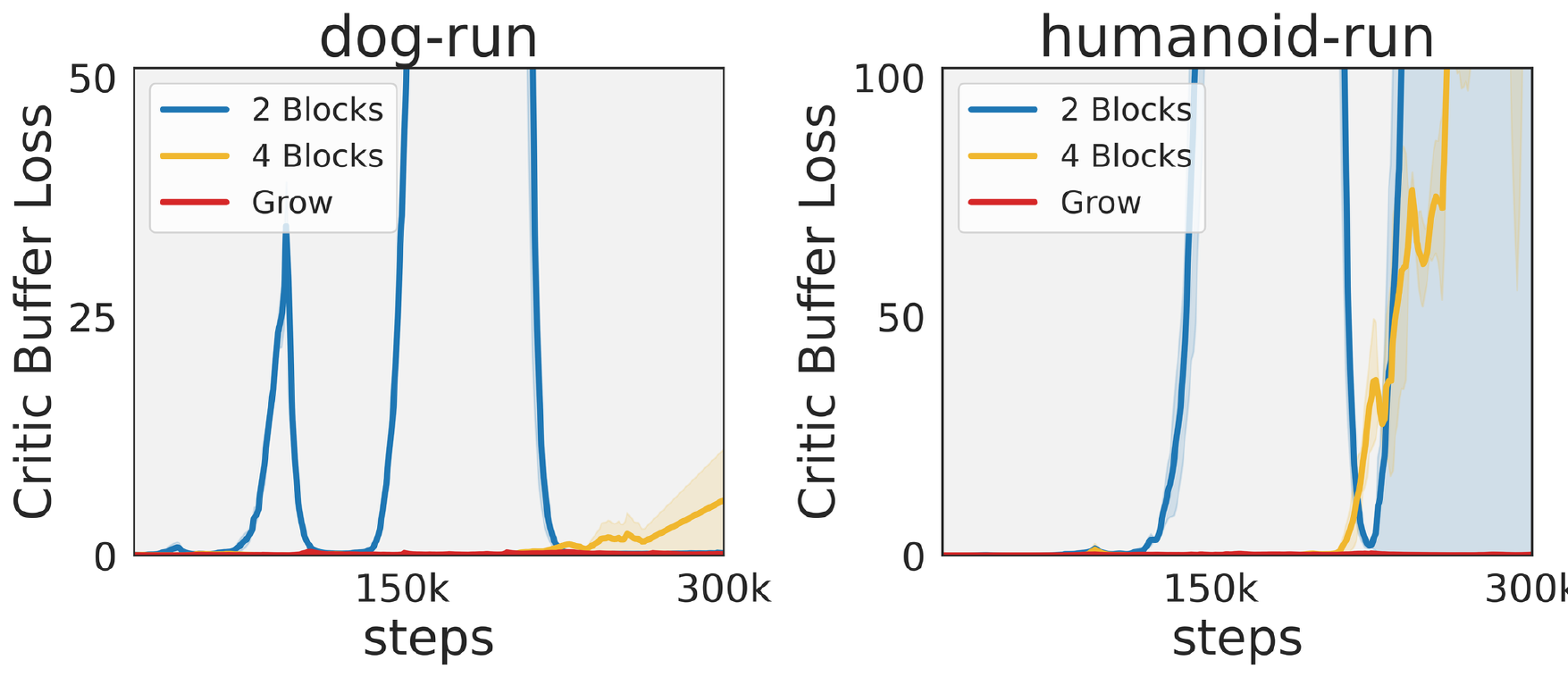}}
    \end{center}
    \vskip -0.15in
    \caption{\textbf{Critic buffer loss with and without network expansion.} We visualize the critic buffer loss of the first 300k steps for the 2 tasks of DMControl suite. Mean of 3 runs; shaded areas are 95\% confidence intervals.}
    \vskip -0.2in
    \label{fig:ablation_loss}
\end{figure}
\begin{figure}[ht]
    \vskip 0.2in
    \begin{center}
    \centerline{\includegraphics[width=\columnwidth]{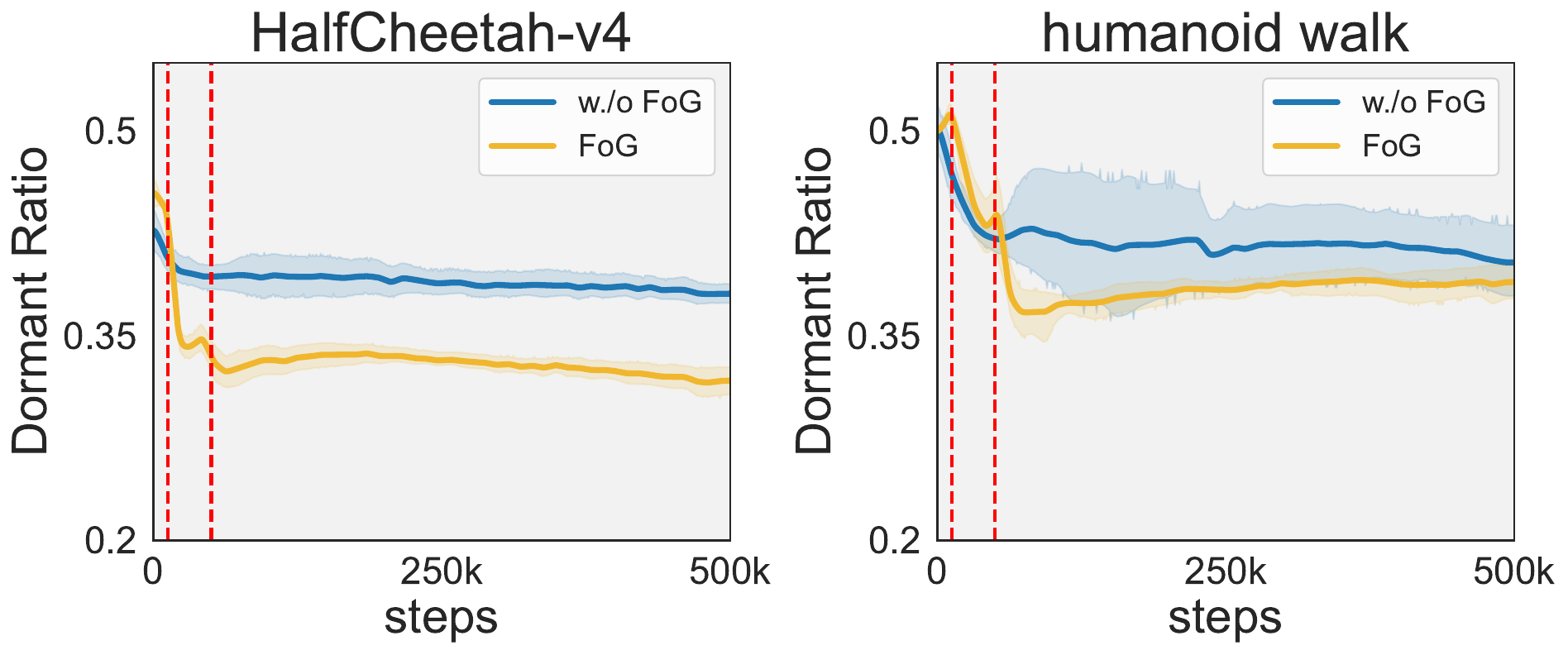}}
    \end{center}
    \vskip -0.15in
    \caption{\textbf{Dormant ratio during training.} We measure the ratio of dormant neurons during training on HalfCheetah-v4 and humanoid-walk. Red lines indicate time steps where network expansion happens.}
    \vskip -0.2in
    \label{fig:ablation_activation}
\end{figure}
\begin{figure}[ht]
    \vskip 0.2in
    \begin{center}
    \centerline{\includegraphics[width=\columnwidth]{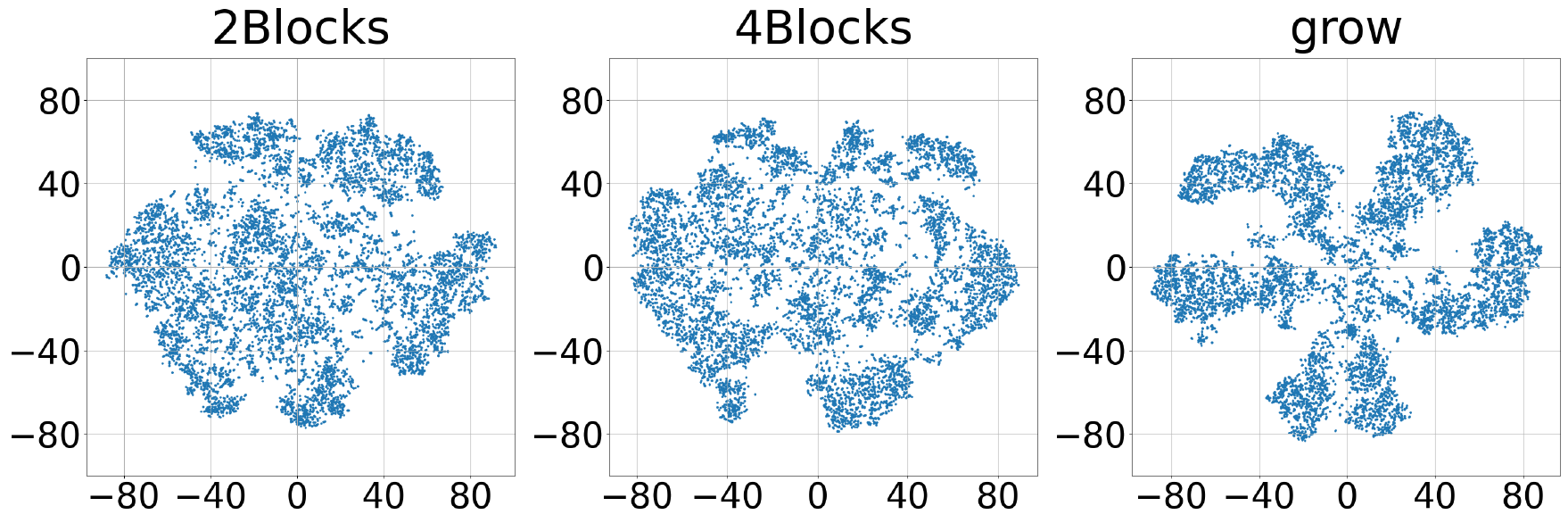}}
    \end{center}
    \vskip -0.15in
    \caption{\textbf{T-SNE visualization of representations.} We visualize the representations via t-SNE after training 0.2M steps on HalfCheetah-v4. From left to right are the t-SNE results of 2 blocks, 4 blocks and expansion from 2 to 4 blocks.}
    \vskip -0.2in
    \label{fig:ablation_tsne}
\end{figure}
\paragraph{Necessity of network expansion.} To establish the importance of network expansion in our Framework of Growth (FoG), we conducted a series of experiments. Firstly, we compared the standard FoG, which incorporates dynamic network expansion, against versions with a fixed model size, denoted FoG \textit{(fixed)}. These fixed models were configured with either 2 or 4 additional blocks, and network expansion was disabled. As illustrated in \cref{fig:ablation_grow}, enabling network expansion leads to a discernible performance improvement over both fixed-size configurations. The critic buffer loss, presented in \cref{fig:ablation_loss}, offers insight into this advantage: after resetting the network, the critic buffer network overfits early transitions and generate incorrect behavior cloning signals, causing the loss to explode. However, network expansion helps to suppress this excessive catastrophic growth in the loss.

Furthermore, network expansion significantly enhances model plasticity by reducing the prevalence of dormant neurons. Activated neurons, characterized by non-zero gradients, can be updated by new data, whereas dormant neurons remain static during training. Consequently, a higher ratio of dormant neurons indicates a more severe loss of plasticity. This metric has been adopted in several recent studies as an indicator of a model's representational capacity and plasticity degradation \cite{liu2024neuroplastic, sokar2023dormant, xu2023drm}. Our experiments, shown in \cref{fig:ablation_activation}, reveal that FoG effectively reduces the number of dormant neurons. Notably, this reduction is achieved even when compared to baseline models that, despite possessing a larger overall parameter count than a fully expanded FoG agent, do not employ network expansion. This highlights the efficacy of the expansion mechanism itself. Moreover, steep drops in the dormant neuron ratio are consistently observed immediately following network expansion events, underscoring its direct role in reactivating parts of the network.

To further understand how the expanded network adapts to new memories and improves representations, we conducted a representation analysis on the HalfCheetah-v4. We sampled experiences from the replay buffer, passed them through the critic network, and extracted features from the final layer for t-SNE visualization in a 2D space. We compared three settings: FoG with network expansion, FoG (fixed) with 2 blocks, and FoG (fixed) with 4 blocks. As depicted in \cref{fig:ablation_tsne}, network expansion facilitates the formation of clearer clusters in the feature space. This observation suggests that network expansion enables the learning of better-separated and more structured representations, contributing to the overall performance gains.


\section{Conclusion}\label{sec:conclusion}
In this work, we propose Forget-and-Grow (FoG), which effectively addresses the primacy bias problem in deep continuous control through two simple yet effective methods: ER decay and network expansion. By incorporating forget and grow, FoG enables agents to mitigate the overfitting to early experiences and boost their performance across various continuous control tasks. Abundant experiment results show the superiority of FoG compared with existing state-of-the-art off-policy RL and model-based RL algorithms, including BRO, SimBa and TD-MPC2. Our findings reveal a new perspective to alleviate the primacy bias, highlighting the potential of integrating inspired cognitive mechanisms into reinforcement learning frameworks. While FoG outperforms existing algorithms across various continuous control tasks, it is important to note that the increased computational complexity and longer training times may limit its practicality in scenarios that require rapid training. Additionally, the effectiveness of our two strategies is primarily supported by empirical experiments and intuitive theoretical insights, lacking a thorough and in-depth investigation into their mechanisms. Future works include seeking theoretical guarantees for the FoG strategies and applying the Forget-and-Grow strategy to a broader range of algorithms.
\section*{Impact Statement}
This work contributes to advancing the field of Reinforcement Learning (RL), particularly in the domain of off-policy RL algorithms. The proposed algorithm holds potential implications for real-world applications, especially in areas such as robotics.
It's worth noting that exploration of an RL agent in real-world environments may require several safety considerations to avoid unsafe behavior during the process.

\section*{Acknowledgment}
The project is supported by Tsinghua Dushi program.

\nocite{langley00}

\bibliography{example_paper}
\bibliographystyle{icml2025}

\newpage
\appendix
\onecolumn

\section{Proofs}
\label{sec:proofs}

\subsection{Proof of Theorem~\ref{thm:older_sampling}}
\begin{proof}
    Given a uniformly sampled replay buffer $\mathcal{D}$ that stores $N$ sequentially added transitions $\{\kappa_1, \kappa_2, \cdots, \kappa_N\}$,
    transition $\kappa_t$ (with $t>1$) is sampled with probability $\frac{1}{i}$, where $i > t$ is the number of transitions at sample time.
    So the expectation of sample times $\mathbb{E}[n_t]$ can be calculated as:
    \[
        \mathbb{E}[n_t] = \beta(\sum_{i=t}^{N} \frac{1}{i}) = \beta(\sum_{i=1}^{N} \frac{1}{i} - \sum_{i=1}^{t-1} \frac{1}{i}) = \beta(H_N - H_{t-1})
    \]
    where $H_N$ is the $N$-th harmonic number, and $H_{t-1}$ is the $(t-1)$-th harmonic number.
    And we have the following inequality:
    \[
        \ln{n} + \frac{1}{n} < H_n < \ln{n} + 1
    \]
    So we can get:
    \[
        \ln{\frac{N}{t-1}} + \frac{1}{N} - 1 = (\ln{N} + \frac{1}{N}) - (\ln{t-1} + 1) < \frac{\mathbb{E}[n_t]}{\beta} < (\ln{N} + 1) - (\ln{t-1} + \frac{1}{t-1}) = \ln{\frac{N}{t-1}} + 1 - \frac{1}{t-1}
    \]
    For the earliest transition $\kappa_1$, its expected sample times $\mathbb{E}[n_1]$ is bounded by:
    \[
        \ln{N} + \frac{1}{N}< \frac{\mathbb{E}[n_1]}{\beta} < \ln{N} + 1
    \]
    The earliest transition is at least sampled $\Omega(\beta \log N)$ times in expectation.
\end{proof}

\subsection{Proof of Theorem~\ref{thm:bounded_sampling}}
\begin{proof}
    Let $\mathcal{D}$ be a replay buffer with ER decay $\epsilon$. For any transition $\kappa_i$ in $\mathcal{D}$,
    its expected sample times $\mathbb{E}[n_i]$ can be calculated as:
    \[
        \mathbb{E}[n_i] = \beta(\sum_{t=0}^{\infty} \frac{(1-\epsilon)^t}{1 + (1 - \epsilon) + \cdots + (1 - \epsilon)^{i+t-1}})
    \]
    We have:
    \[
        \mathbb{E}[n_i] \leq \mathbb{E}[n_1], \forall i \in \{1, 2, \cdots, N\}
    \]
    So the expected sample times of any transition in $\mathcal{D}$ is bounded by the earliest transition.
    \[
        \mathbb{E}[n_1] = \beta(\sum_{t=0}^{\infty} \frac{(1-\epsilon)^t}{1 + (1 - \epsilon) + \cdots + (1 - \epsilon)^{t}}) = \beta(\sum_{t=0}^{\infty} \frac{\epsilon (1-\epsilon)^t}{1 - (1 - \epsilon)^{t+1}})
    \]
    As $ \epsilon = 1 - (1 - \epsilon) < 1 - (1 - \epsilon)^{t+1}$, we have:
    \[
        \mathbb{E}[n_1] < \beta(\sum_{t=0}^{\infty} (1 - \epsilon)^t) = \frac{\beta}{\epsilon}
    \]
    So we can find a constant $C$ such that:
    \[
        \mathbb{E}[n_i] < C, \forall i \in \{1, 2, \cdots, N\}
    \]
    The expected sample times of any transition in $\mathcal{D}$ is bounded by a constant.
\end{proof}

\section{Implementation Details}
\label{sec:implementation}
\subsection{Hyperparameters}
In this section, we delve into the specific implementation details of \ours.
Our Hyperparameters are listed in \cref{tab:hyperparameters}.

\begin{table}[ht]
    \caption{The hyperparameters of the proposed method}\label{tab:hyperparameters}
    \centering
    \begin{tabular}{@{}lll@{}}
    \toprule
    \multicolumn{1}{c}{\multirow{2}{*}{\centering Hyperparameters}} & Hyperparameter                    & Value     \\ \cmidrule(lr){2-3}
                                                                        & Optimizer (Critic)                & AdamW     \\
                                                                        & Critic learning rate              & 3e-4      \\
                                                                        & Critic initial depth              & 2         \\
                                                                        & Critic maximal depth              & 4         \\
                                                                        & Critic expansion iters            & 50k, 200k \\
                                                                        & Optimizer (Actor)                 & Adam      \\
                                                                        & Actor dense layers                & 3         \\
                                                                        & Actor learning rate               & 3e-4      \\
                                                                        & Actor log std clipping            & (-20,2)   \\
                                                                        & Discount factor                   & 0.99      \\
                                                                        & Batch size                        & 256       \\
                                                                        & Replay buffer size                & 1e6       \\
                                                                        & ER
                                                                        Decay $\epsilon$           & 1e-5      \\
                                                                        & Minimal buffer weight $\tau$      & 0.1       \\
                                                                        & Behavior clone weight $\lambda$   & 1e-3      \\ \cmidrule(lr){1-3}
    \multicolumn{1}{c}{\multirow{2}{*}{\centering Network Architecture}} & Network hidden dim                & 512       \\
                                                                        & Network activation function       & elu       \\
                                                                        & Critic depth                      & 2-4       \\
    \bottomrule
    \end{tabular}
\end{table}
    
For all tasks, we use a max-entropy framework~\cite{haarnoja2018softactorcriticoffpolicymaximum} for the online learning policy $\pi$ with automatic temperature tuning.
Besides, we set the pessimism of the online policy ~\cite{moskovitz2022tacticaloptimismpessimismdeep} to 0 in MetaWorld tasks to further encourage exploration. In other benchmarks it is set to 1.

We use two reset lists for \ours. For 4 relatively simple locomotion tasks (h1-stand, h1-walk, h1-stair and h1-slide) in HumanoidBench, we use a reset list of 15k, 50k, 250k, 500k, 750k (The same as BRO's reset list) to further improve exploitation. We use a reset list of 15k, 50k, 100k, 200k, 400k, 600k, 800k for other benchmarks and other tasks in HumanoidBench.

\subsection{Details of Network Expansion}
The expansion of the critic network is a key component of \ours. 
Each expansion step adds a new block composed of two dense layers with 512 hidden dims and ELU activation functions.
Surprisingly, \textbf{we find that there is no restriction on the initialization of the new block}, 
and we initialize the new block with the same initialization as the original network (which is orthogonal init with scale of $\sqrt{2}$).

At each expansion step, we reinitialize the optimizer and decay the learning rate by the number of parameters in the network. Namely, we use $\text{init\_lr} \times \frac{\text{init\_params}}{\text{current\_params}}$ to decay the learning rate, where $\text{init\_lr}$ is the initial learning rate, $\text{init\_params}$ is approximated by the number of dense layers in the initial network, and $\text{current\_params}$ is the number of dense layers in the current network.

Each time the network is reset, we reinitialize the depth of the critic network back to 2, so the network can expand again.

\subsection{Other Implementation Details}
\paragraph{Small actor network} Previous work ~\cite{lee2024simba, nauman2024bigger} has shown that scaling up the actor network only provides marginal improvements in performance. To simplify our framework, we did not impose any special characteristics or constraints on the actor network or optimizer beyond what is typically done in standard SAC implementations. Specifically, we used the simplest MLP architecture and the Adam optimizer, which also provides a solid foundation for deploying the actor.

\paragraph{About OBAC implementation} In the OBAC algorithm, an offline agent is used to improve the online agent. Specifically, the offline agent adds a constraint to the online actor, encouraging it to learn from the offline agent when the Q-value estimate of the online actor is lower than that of the offline agent. In our experiments, we found that after a reset during training, the offline agent could quickly gain an advantage over the online actor by better utilizing the information in the buffer. Allowing the actor to immediately learn from the offline agent could lead to early convergence. To address this, we introduced a "protection period" for the online actor, during which the OBAC algorithm is temporarily disabled after a reset. This period, which we call \textit{OBAC wait}, was found to yield fine results when set to 250k network iterations in all cases. Thus, we use 250k as the default in all experiments. 

\subsection{Is Forget and Grow a Universal Technique?}
In the \ours \ algorithm, we use OBAC as the foundational framework, combined with scaling the network size and replay ratio. A natural question arises: is the Forget and Grow technique universally applicable?

We tested SAC in the MuJoCo environment and observed the following: even with the standard SAC algorithm at a replay ratio of 1, using ER decay "forget" technique led to stable performance improvements. However, network expansion required a larger replay ratio to achieve relatively significant effects. This may be due to the fact that the newly introduced parameters in network expansion require more intense updates before they become effective.

\begin{figure}[htbp]
    \centering
    \includegraphics[width=0.5\textwidth]{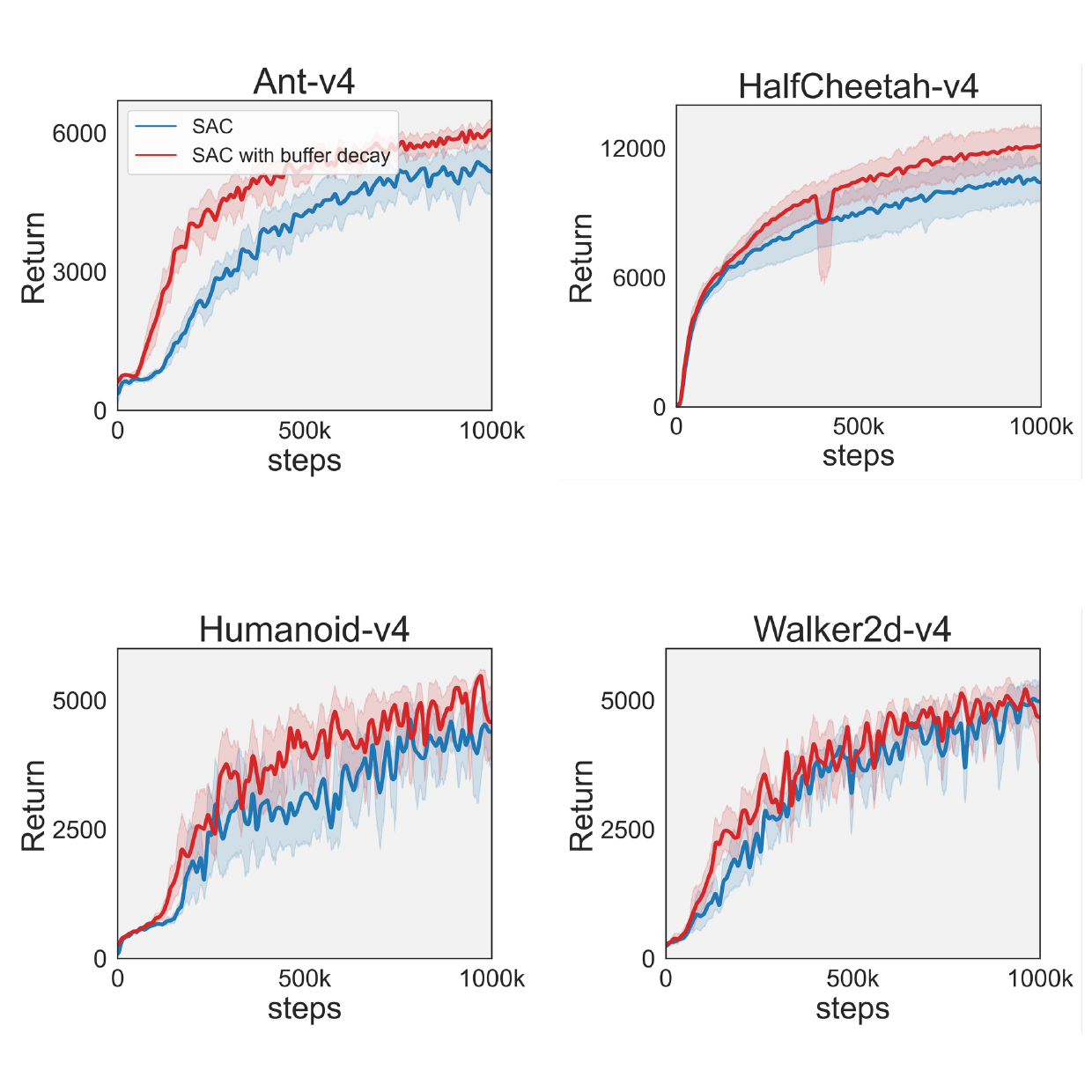}
    \caption{The results of SAC and SAC with ER decay \textbf{4} tasks in \textbf{Mujoco}.}
    \label{fig:mujoco_normal_sac}
\end{figure}

We also evaluated the data efficiency of the FoG-enhanced SAC (FoG-SAC) algorithm in both DMC-Hard and MuJoCo environments. We found that FoG-SAC achieved data efficiency close to that of BRO, showing significant improvement in the later stages of training. However, it did not outperform BRO, which had undergone other algorithmic adjustments.

\begin{figure}[htbp]
    \centering
    \includegraphics[width=0.8\textwidth]{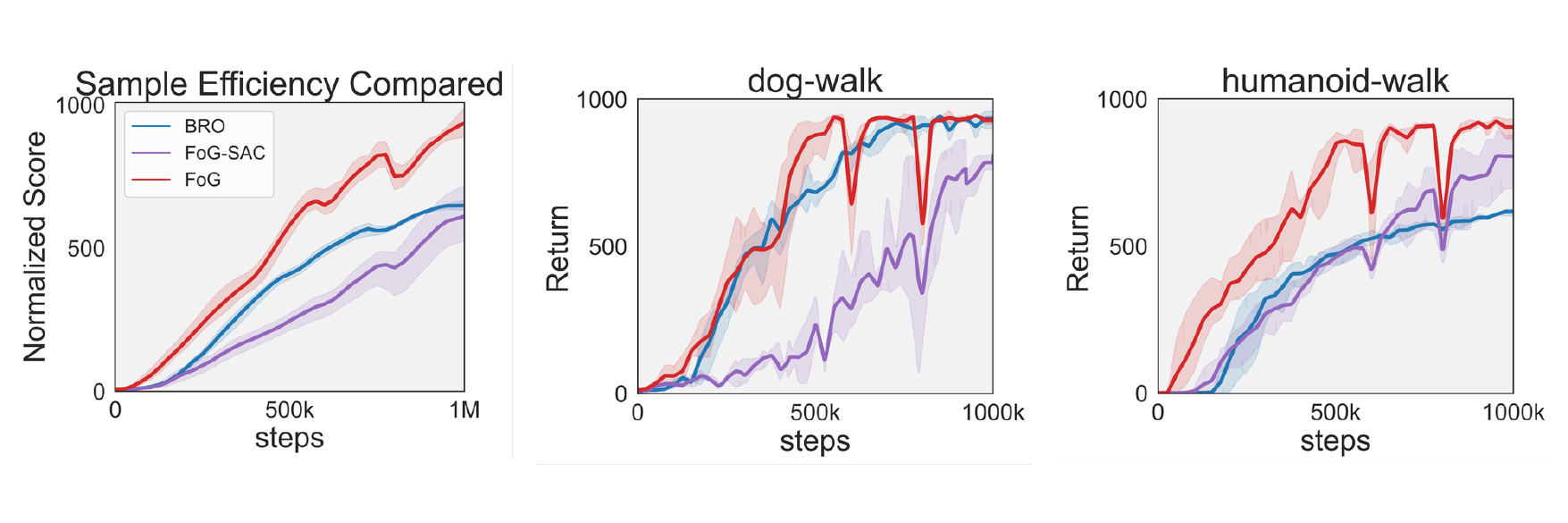}
    \caption{The results of FoG-SAC}
    \label{fig:fog-sac}
\end{figure}

We also tested FoG with BRO, but FoG-BRO yielded performance very similar to that of the original BRO. This suggests that some of the adjustments in BRO may conflict with the FoG mechanism.

Overall, our experiments indicate that FoG integrates well with the native SAC and OBAC algorithms. However, its effectiveness when combined with other existing algorithms warrants further investigation.

\subsection{Baselines and Environments}
We compare \ours \ with BRO, SimBa and TD-MPC2, we use their official implementations, hyperparameters and results to ensure a fair comparison.

\begin{enumerate}
    \item \textbf{BRO}: We use the official implementation from https://github.com/naumix/BiggerRegularizedOptimistic.
    \item \textbf{SimBa}: We use SimBa-SAC from official implementation and results from https://github.com/SonyResearch/simba.
    \item \textbf{TD-MPC2}: We use the official implementation and results from https://github.com/nicklashansen/tdmpc2.
    \item \textbf{SAC}: We use implementation from https://github.com/proceduralia/high\_replay\_ratio\_continuous\_control.
\end{enumerate}

We use the official setting of each task domain, including the reward setting, the task horizon, the done signal and there original state-action spaces.

\subsection{Official Implementation of \ours}
Please check https://github.com/nothingbutbut/FoG.git for official implementation of \ours.

\section{Complete Experimental Results}
\label{sec:results}
To show the superiority of \ours, we list all the experimental results in this section.

\subsection{Numerical Results}
\begin{table}[H]
\centering
\parbox{\textwidth}{
\caption{Normalized Score over benchmarks}
\label{appendix:normalized}
\centering
\begin{tabular}{l|c|c|c|c}
\toprule
Environment & FoG & TD-MPC2 & SimBa & BRO \\
\midrule
Mujoco & \textbf{0.96 $\pm$ 0.02} & 0.07 $\pm$ 0.00 & 0.58 $\pm$ 0.03 & 0.85 $\pm$ 0.04 \\
DMC-Easy & \textbf{0.99 $\pm$ 0.00} & 0.73 $\pm$ 0.01 & 0.74 $\pm$ 0.04 & 0.90 $\pm$ 0.11 \\
DMC-Medium & \textbf{0.85 $\pm$ 0.05} & 0.65 $\pm$ 0.03 & 0.63 $\pm$ 0.02 & 0.79 $\pm$ 0.04 \\
DMC-Hard & \textbf{0.99 $\pm$ 0.01} & 0.50 $\pm$ 0.02 & 0.75 $\pm$ 0.01 & 0.76 $\pm$ 0.02 \\
MetaWorld & \textbf{0.99 $\pm$ 0.00} & 0.90 $\pm$ 0.08 & 0.64 $\pm$ 0.07 & 0.96 $\pm$ 0.04 \\
HumanoidBench & \textbf{0.82 $\pm$ 0.02} & \textbf{0.81 $\pm$ 0.02} & 0.73 $\pm$ 0.04 & 0.54 $\pm$ 0.01 \\
\midrule
Total & \textbf{0.92 $\pm$ 0.01} & 0.70 $\pm$ 0.02 & 0.69 $\pm$ 0.02 & 0.76 $\pm$ 0.02 \\
\bottomrule
\end{tabular}
}
\vspace{0.15in}
\end{table}

\begin{table}[H]
\centering
\parbox{\textwidth}{
\caption{Returns of \textit{600k} steps from Mujoco tasks}
\label{appendix:Mujoco}
\centering
\begin{tabular}{l|c|c|c|c}
\toprule
Method & FoG & TD-MPC2 & SimBa & BRO \\
\midrule
Ant & \textbf{6979.19 $\pm$ 98.06} & 563.04 $\pm$ 54.12 & 4827.24 $\pm$ 148.80 & 6798.11 $\pm$ 237.61 \\
HalfCheetah & \textbf{15119.91 $\pm$ 583.41} & 2445.42 $\pm$ 48.43 & 10393.10 $\pm$ 656.98 & 13765.24 $\pm$ 712.18 \\
Humanoid & \textbf{7170.79 $\pm$ 24.58} & 282.40 $\pm$ 25.21 & 2803.31 $\pm$ 290.75 & 6038.88 $\pm$ 918.07 \\
Walker2d & \textbf{5310.89 $\pm$ 320.60} & 93.38 $\pm$ 66.43 & 3365.35 $\pm$ 695.81 & 4223.97 $\pm$ 457.07 \\
\bottomrule
\end{tabular}
}
\vspace{0.15in}
\end{table}

\begin{table}[H]
\centering
\parbox{\textwidth}{
\caption{Returns of \textit{150k} steps from DMC-Easy tasks}
\label{appendix:DMC_EASY}
\centering
\begin{tabular}{l|c|c|c|c}
\toprule
Method & FoG & TD-MPC2 & SimBa & BRO \\
\midrule
Cartpole Balance & 999.62 $\pm$ 0.08 & 997.77 $\pm$ 0.58 & 999.64 $\pm$ 0.17 & \textbf{999.70 $\pm$ 0.22} \\
Cartpole Swingup & \textbf{879.58 $\pm$ 0.43} & 839.40 $\pm$ 35.20 & 872.01 $\pm$ 8.37 & 878.20 $\pm$ 0.88 \\
finger-spin & \textbf{983.07 $\pm$ 4.49} & 947.50 $\pm$ 27.89 & 703.57 $\pm$ 124.98 & 941.23 $\pm$ 20.54 \\
Hopper Stand & \textbf{922.97 $\pm$ 14.54} & 28.17 $\pm$ 28.21 & 237.10 $\pm$ 106.32 & 604.57 $\pm$ 416.54 \\
\bottomrule
\end{tabular}
}
\vspace{0.15in}
\end{table}

\begin{table}[H]
\centering
\parbox{\textwidth}{
\caption{Returns of \textit{500k} steps from DMC-Medium tasks}
\label{appendix:DMC_MEDIUM}
\centering
\begin{tabular}{l|c|c|c|c}
\toprule
Method & FoG & TD-MPC2 & SimBa & BRO \\
\midrule
Acrobot Swingup & 423.88 $\pm$ 21.81 & 368.27 $\pm$ 39.63 & 377.23 $\pm$ 28.29 & \textbf{509.34 $\pm$ 38.23} \\
Hopper Hop & \textbf{409.42 $\pm$ 119.77} & 285.43 $\pm$ 56.43 & 278.30 $\pm$ 2.66 & 288.40 $\pm$ 9.97 \\
Humanoid Stand & \textbf{866.60 $\pm$ 27.14} & 401.87 $\pm$ 40.85 & 402.85 $\pm$ 46.63 & 769.53 $\pm$ 131.23 \\
Walker Run & \textbf{820.02 $\pm$ 2.45} & 818.07 $\pm$ 7.18 & 753.80 $\pm$ 10.10 & 760.53 $\pm$ 18.42 \\
\bottomrule
\end{tabular}
}
\vspace{0.15in}
\end{table}

\begin{table}[H]
\centering
\parbox{\textwidth}{
\caption{Returns of \textit{1000k} steps from DMC-Hard tasks}
\label{appendix:DMC_HARD}
\centering
\begin{tabular}{l|c|c|c|c}
\toprule
Method & FoG & TD-MPC2 & SimBa & BRO \\
\midrule
Dog Run & \textbf{652.78 $\pm$ 7.02} & 183.13 $\pm$ 14.75 & 534.02 $\pm$ 16.77 & 479.68 $\pm$ 12.26 \\
Dog Trot & \textbf{911.62 $\pm$ 7.98} & 423.13 $\pm$ 42.88 & 856.32 $\pm$ 26.84 & 842.09 $\pm$ 51.18 \\
Dog Walk & \textbf{954.09 $\pm$ 6.14} & 719.83 $\pm$ 55.70 & 924.67 $\pm$ 8.60 & 948.46 $\pm$ 4.95 \\
Humanoid Run & \textbf{436.34 $\pm$ 10.87} & 178.17 $\pm$ 2.93 & 177.10 $\pm$ 9.81 & 235.07 $\pm$ 48.55 \\
Humanoid Walk & \textbf{932.03 $\pm$ 6.96} & 572.10 $\pm$ 14.33 & 609.73 $\pm$ 36.53 & 609.56 $\pm$ 3.05 \\
\bottomrule
\end{tabular}
}
\vspace{0.15in}
\end{table}

\begin{table}[H]
\centering
\parbox{\textwidth}{
\caption{Success rates of \textit{1000k} steps from Metaworld tasks}
\label{appendix:MetaWorld}
\centering
\begin{tabular}{l|c|c|c|c}
\toprule
Method & FoG & TD-MPC2 & SimBa & BRO \\
\midrule
Assembly & \textbf{1.00 $\pm$ 0.00} & 0.67 $\pm$ 0.47 & 0.70 $\pm$ 0.42 & \textbf{1.00 $\pm$ 0.00} \\
Coffee pull & \textbf{1.00 $\pm$ 0.00} & \textbf{1.00 $\pm$ 0.00} & \textbf{1.00 $\pm$ 0.00} & 0.93 $\pm$ 0.09 \\
Coffee push & 0.93 $\pm$ 0.05 & \textbf{1.00 $\pm$ 0.00} & 0.97 $\pm$ 0.05 & 0.87 $\pm$ 0.19 \\
Disassemble & \textbf{1.00 $\pm$ 0.00} & 0.67 $\pm$ 0.47 & \textbf{1.00 $\pm$ 0.00} & \textbf{1.00 $\pm$ 0.00} \\
Pick out of hole & \textbf{1.00 $\pm$ 0.00} & \textbf{1.00 $\pm$ 0.00} & \textbf{1.00 $\pm$ 0.00} & \textbf{1.00 $\pm$ 0.00} \\
Pick place & \textbf{1.00 $\pm$ 0.00} & \textbf{1.00 $\pm$ 0.00} & \textbf{1.00 $\pm$ 0.00} & \textbf{1.00 $\pm$ 0.00} \\
Pick place wall & \textbf{1.00 $\pm$ 0.00} & \textbf{1.00 $\pm$ 0.00} & 0.00 $\pm$ 0.00 & 0.80 $\pm$ 0.28 \\
Push back & \textbf{1.00 $\pm$ 0.00} & 0.67 $\pm$ 0.47 & 0.67 $\pm$ 0.47 & \textbf{1.00 $\pm$ 0.00} \\
Shelf place & \textbf{1.00 $\pm$ 0.00} & \textbf{1.00 $\pm$ 0.00} & 0.10 $\pm$ 0.14 & \textbf{1.00 $\pm$ 0.00} \\
Stick push & \textbf{1.00 $\pm$ 0.00} & \textbf{1.00 $\pm$ 0.00} & 0.00 $\pm$ 0.00 & \textbf{1.00 $\pm$ 0.00} \\
\bottomrule
\end{tabular}
}
\vspace{0.15in}
\end{table}

\begin{table}[H]
\centering
\parbox{\textwidth}{
\caption{Returns of \textit{1000k} steps from HumanoidBench tasks}
\label{appendix:HumanoidBench}
\centering
\begin{tabular}{l|c|c|c|c}
\toprule
Method & FoG & TD-MPC2 & SimBa & BRO \\
\midrule
Balance Hard & 72.84 $\pm$ 1.82 & 61.23 $\pm$ 2.83 & \textbf{79.87 $\pm$ 9.15} & 62.10 $\pm$ 2.48 \\
Balance Simple & \textbf{546.66 $\pm$ 66.01} & 52.88 $\pm$ 8.01 & 256.46 $\pm$ 135.38 & 68.77 $\pm$ 5.11 \\
Crawl & \textbf{971.02 $\pm$ 0.65} & 963.36 $\pm$ 2.48 & 939.58 $\pm$ 18.65 & 897.50 $\pm$ 31.20 \\
Hurdle & 86.31 $\pm$ 7.19 & \textbf{363.48 $\pm$ 34.69} & 208.65 $\pm$ 7.76 & 48.50 $\pm$ 0.65 \\
Maze & 380.13 $\pm$ 5.53 & 323.64 $\pm$ 4.12 & \textbf{389.39 $\pm$ 19.61} & 273.00 $\pm$ 61.73 \\
Pole & \textbf{817.70 $\pm$ 73.53} & 647.96 $\pm$ 172.64 & 754.56 $\pm$ 4.06 & 340.60 $\pm$ 24.14 \\
Reach & 3963.71 $\pm$ 289.43 & 3913.07 $\pm$ 740.94 & \textbf{4418.50 $\pm$ 395.17} & 3984.43 $\pm$ 236.95 \\
Run & 437.49 $\pm$ 49.97 & \textbf{780.80 $\pm$ 3.18} & 262.65 $\pm$ 73.32 & 49.33 $\pm$ 12.19 \\
Sit Hard & 814.95 $\pm$ 12.93 & 749.94 $\pm$ 14.30 & 667.38 $\pm$ 206.50 & \textbf{829.10 $\pm$ 4.34} \\
Sit Simple & 843.99 $\pm$ 8.91 & 801.01 $\pm$ 1.15 & \textbf{860.78 $\pm$ 5.00} & 850.03 $\pm$ 3.88 \\
Slide & \textbf{396.74 $\pm$ 25.44} & 329.21 $\pm$ 26.01 & 270.43 $\pm$ 17.06 & 250.10 $\pm$ 7.94 \\
Stair & 386.91 $\pm$ 117.06 & \textbf{562.50 $\pm$ 24.62} & 226.79 $\pm$ 162.50 & 77.57 $\pm$ 0.70 \\
Stand & 806.14 $\pm$ 27.59 & 812.80 $\pm$ 2.95 & \textbf{845.22 $\pm$ 9.60} & 799.73 $\pm$ 16.74 \\
Walk & \textbf{842.72 $\pm$ 8.53} & 813.88 $\pm$ 1.38 & 619.36 $\pm$ 200.68 & 186.17 $\pm$ 27.30 \\
\bottomrule
\end{tabular}
}
\vspace{0.15in}
\end{table}

\subsection{Learning Curves}
One thing to notice about DMC-easy tasks \cref{fig:dmc_results_easy} is that the starting point of the learning curve is 25k steps,
which is because the first evaluation step of BRO is at 25k steps.
At this time, \ours has already achieved convergence in some environments, like \textit{cartpole-swingup}.

\begin{figure}[H]
    \centering
    \includegraphics[width=\textwidth]{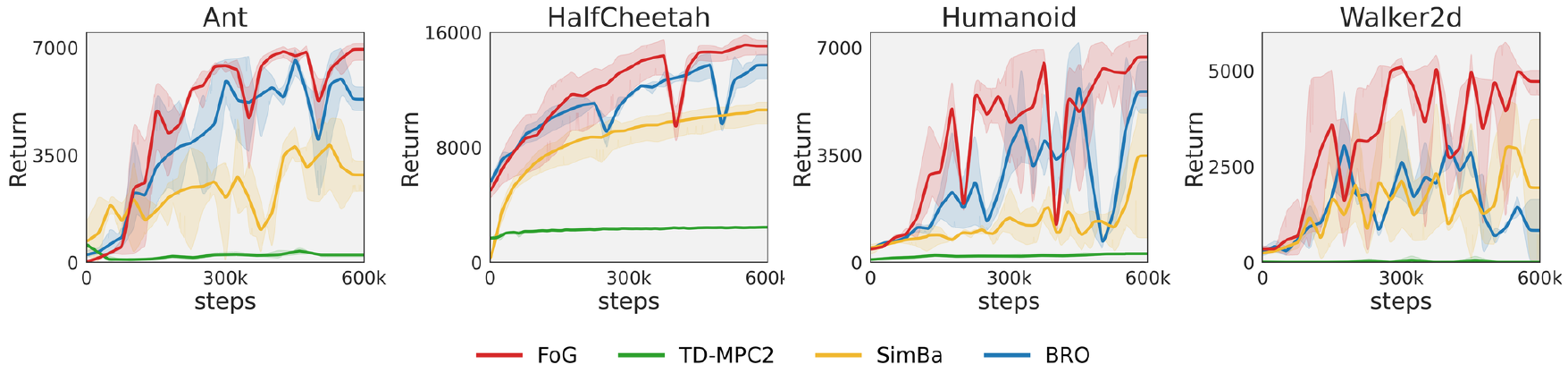}
    \caption{The results of \textbf{4} tasks in \textbf{Mujoco}.}
    \label{fig:mujoco_results}
\end{figure}
\begin{figure}[H]
    \centering
    \includegraphics[width=\textwidth]{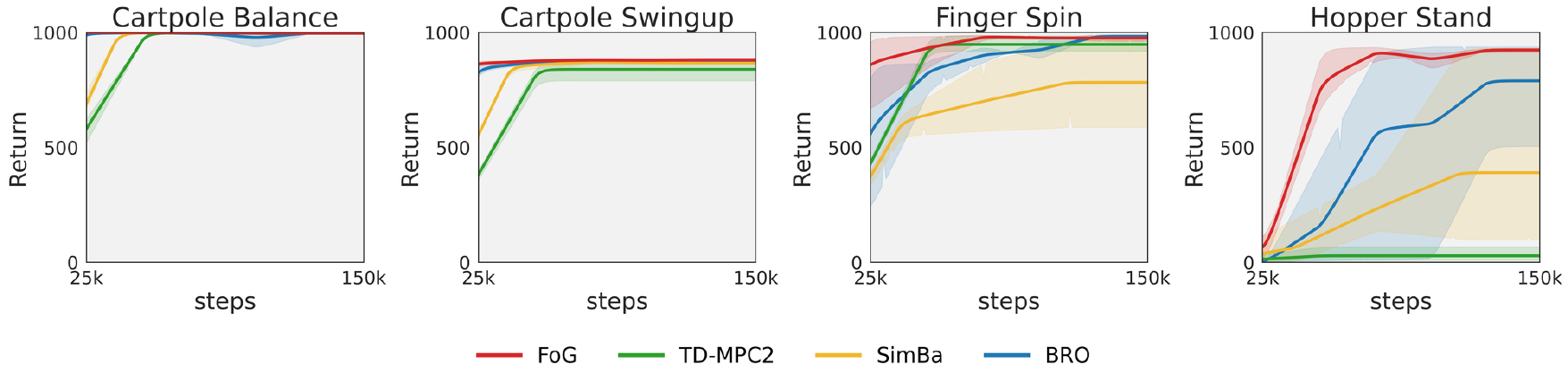}
    \caption{The results of \textbf{4} tasks in \textbf{DM Control Easy}.}
    \label{fig:dmc_results_easy}
\end{figure}
\begin{figure}[H]
    \centering
    \includegraphics[width=\textwidth]{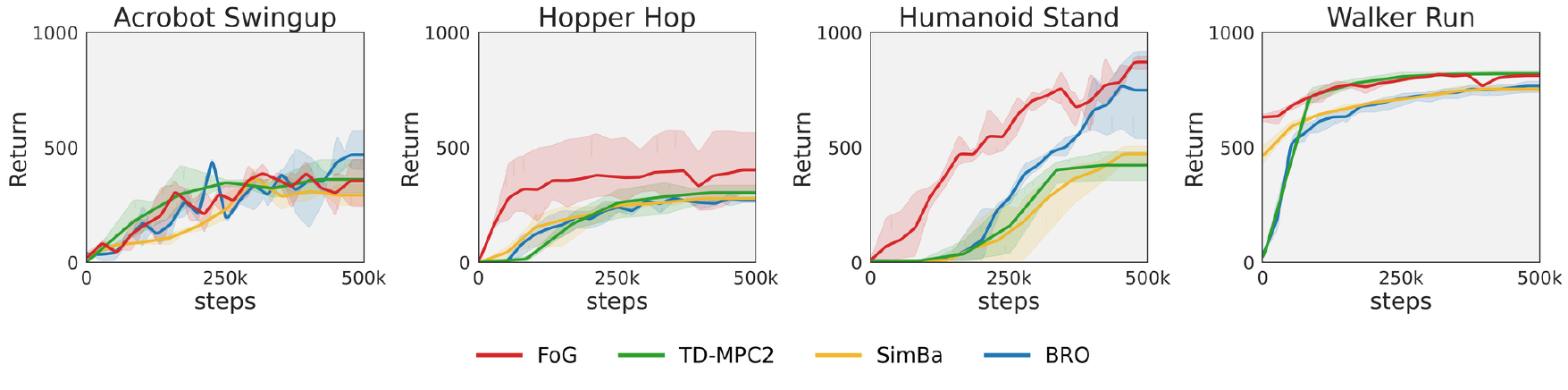}
    \caption{The results of \textbf{4} tasks in \textbf{DM Control Medium}.}
    \label{fig:dmc_results_medium}
\end{figure}
\begin{figure}[H]
    \centering
    \includegraphics[width=\textwidth]{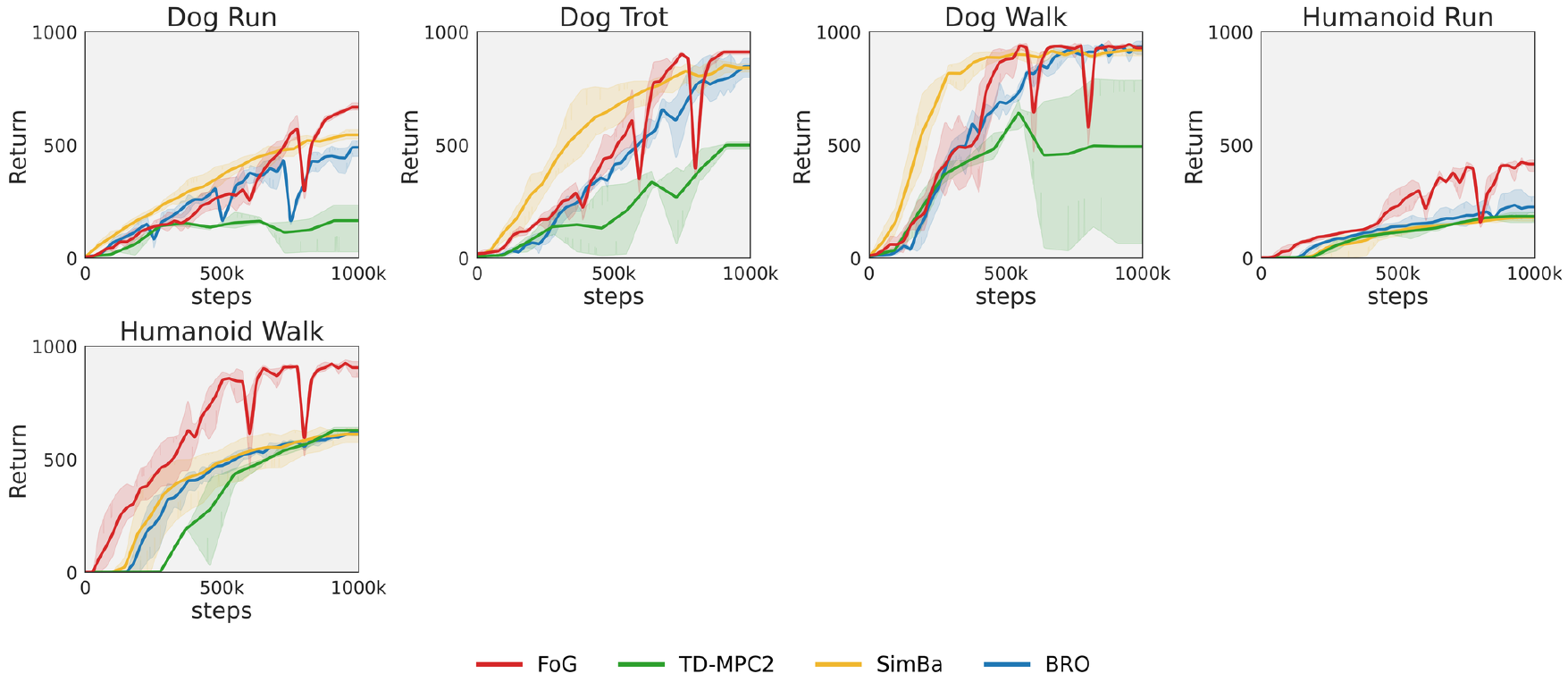}
    \caption{The results of \textbf{5} tasks in \textbf{DM Control Hard}.}
    \label{fig:dmc_results_hard}
\end{figure}
\begin{figure}[H]
    \centering
    \includegraphics[width=\textwidth]{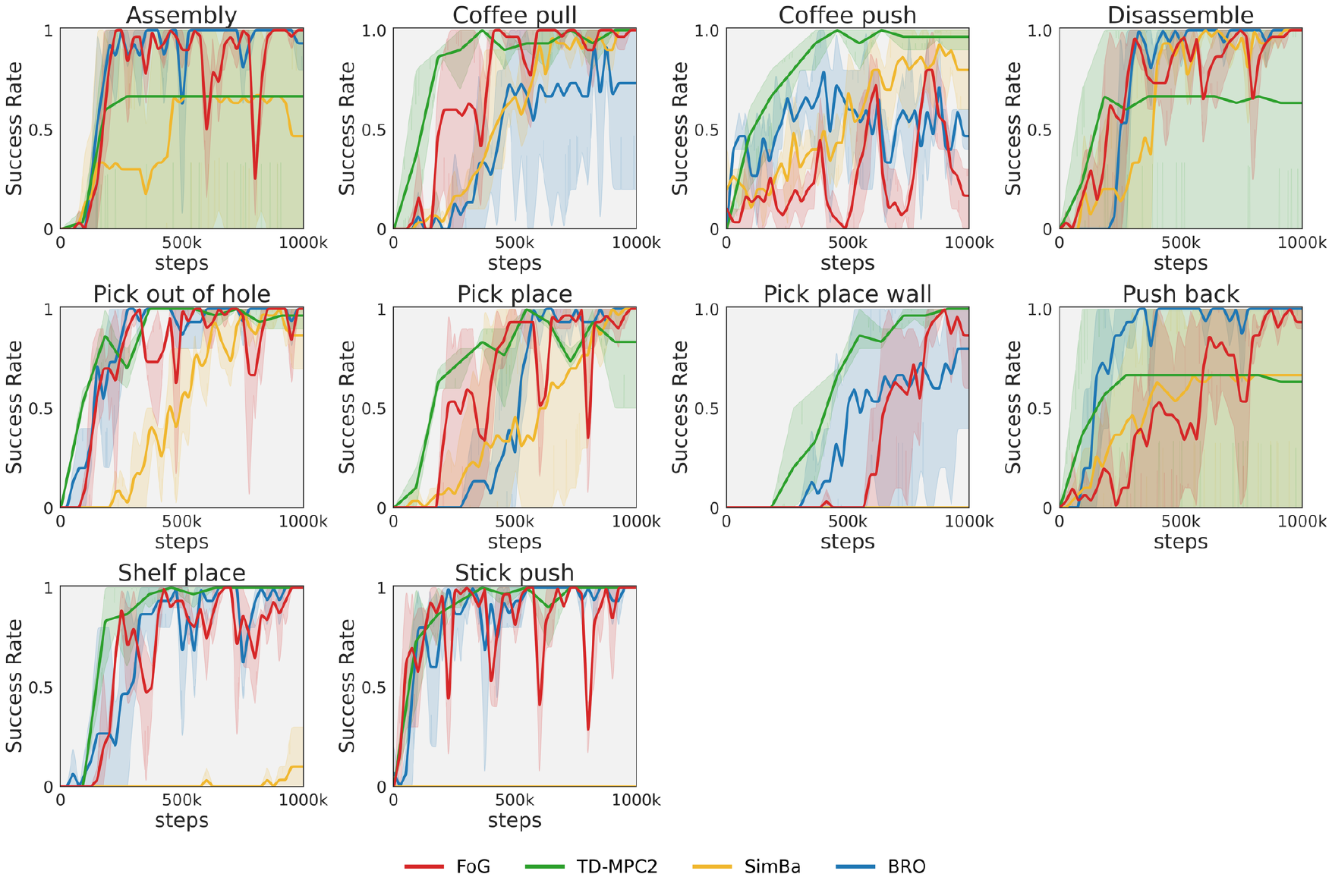}
    \caption{The results of \textbf{5} tasks in \textbf{Meta-World}.}
    \label{fig:metaworld_results}
\end{figure}
\begin{figure}[H]
    \centering
    \includegraphics[width=\textwidth]{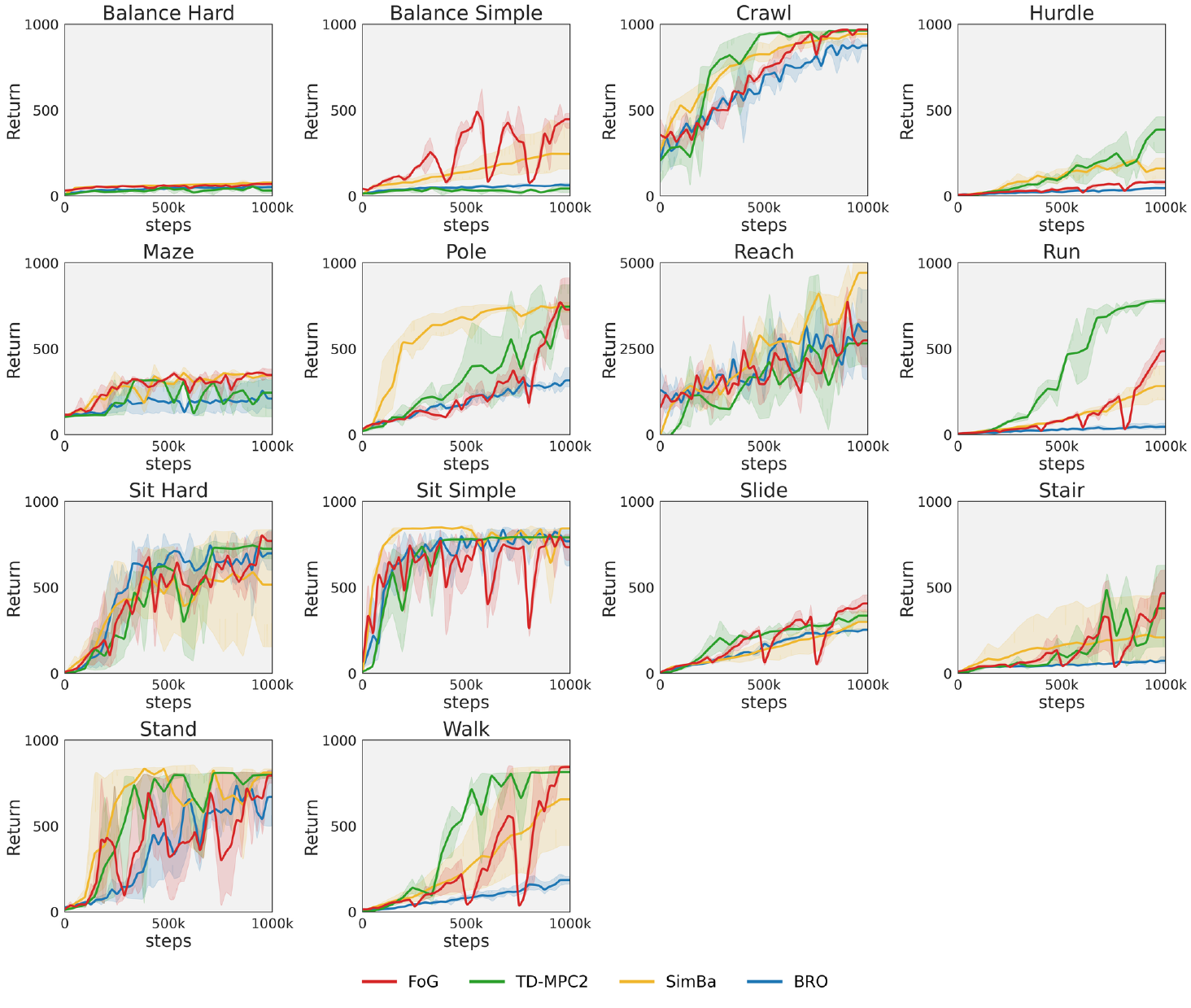}
    \caption{The results of \textbf{14} tasks in \textbf{HumanoidBench}.}
    \label{fig:humanoidbench_results}
\end{figure}

\section{Performance comparison under similar computation cost}
To demonstrate that the performance gains of FoG are not merely due to increased computation, we provide a performance comparison under similar computational costs.

We evaluate SimBa (depth=10, approximately 42M parameters), TD-MPC2 (19M version), and BRO (depth=10, approximately 42M parameters) on the humanoid benchmark to compare their performance against FoG, despite all of them having more parameters than FoG (at most 21M). Our experiments demonstrate that FoG outperforms competitive baselines even when compared to larger models, achieving superior performance while using less computation in this setup.

\begin{figure}[H]
    \centering
    \includegraphics[width=0.5\textwidth]{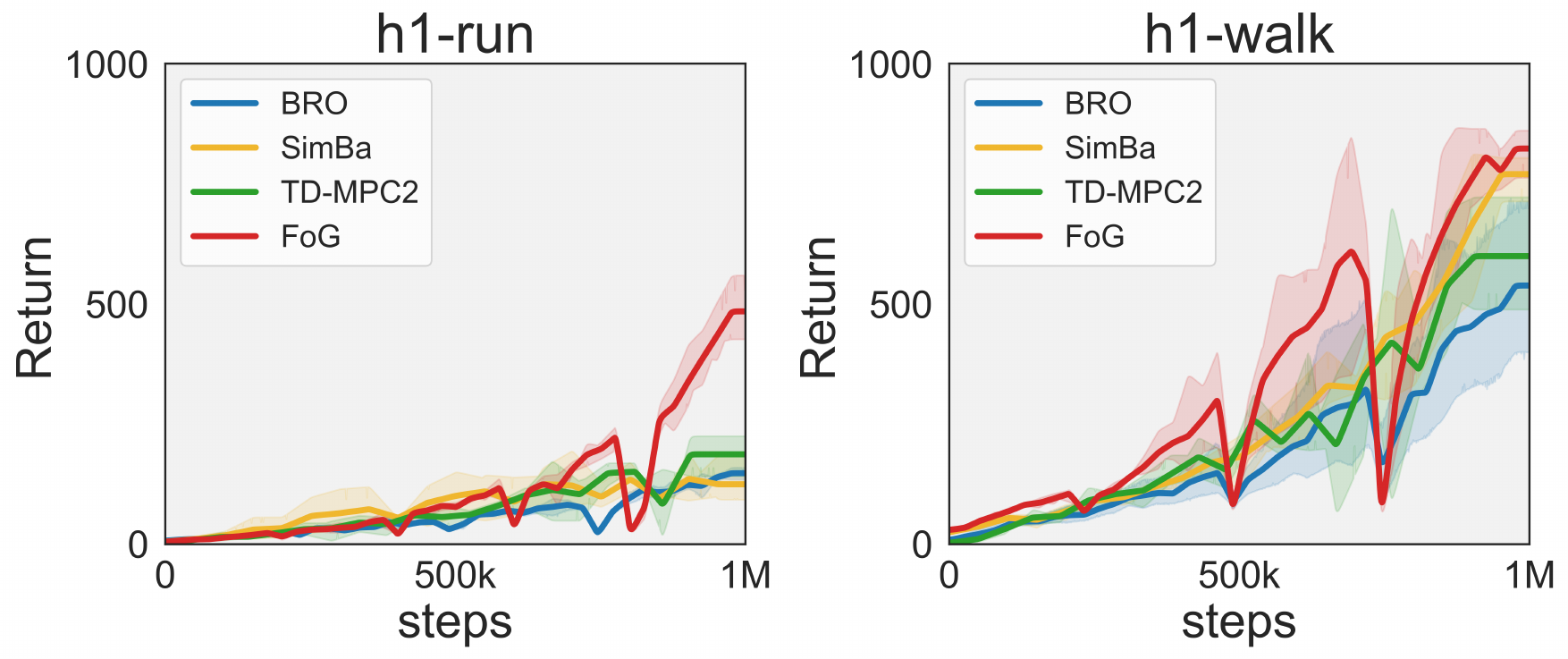}
    \caption{The results under similar computation cost}
    \label{fig:full-size}
\end{figure}
\section{Comparison to similar methods}
We conducted a comparative analysis between FoG and the Neuroplastic Expansion (NE) algorithm\cite{liu2024neuroplastic}, utilizing the official NE implementation to ensure fairness. The evaluation was performed across four diverse tasks: HalfCheetah-v4, dog-run, dog-walk, and humanoid-walk. Both the original NE model and a variant with expanded capacity were tested to examine the impact of model scaling on performance.

The results consistently demonstrate that FoG outperforms NE across all tasks, even when operating at relatively low update-to-data ratios. Notably, increasing the capacity of the NE model yields only marginal performance gains, whereas FoG maintains robust improvements without requiring significant model enlargement. These findings underscore FoG’s superior scalability and adaptability, establishing it as a more effective approach for handling complex reinforcement learning environments.
\begin{figure}[H]
    \centering
    \includegraphics[width=0.5\textwidth]{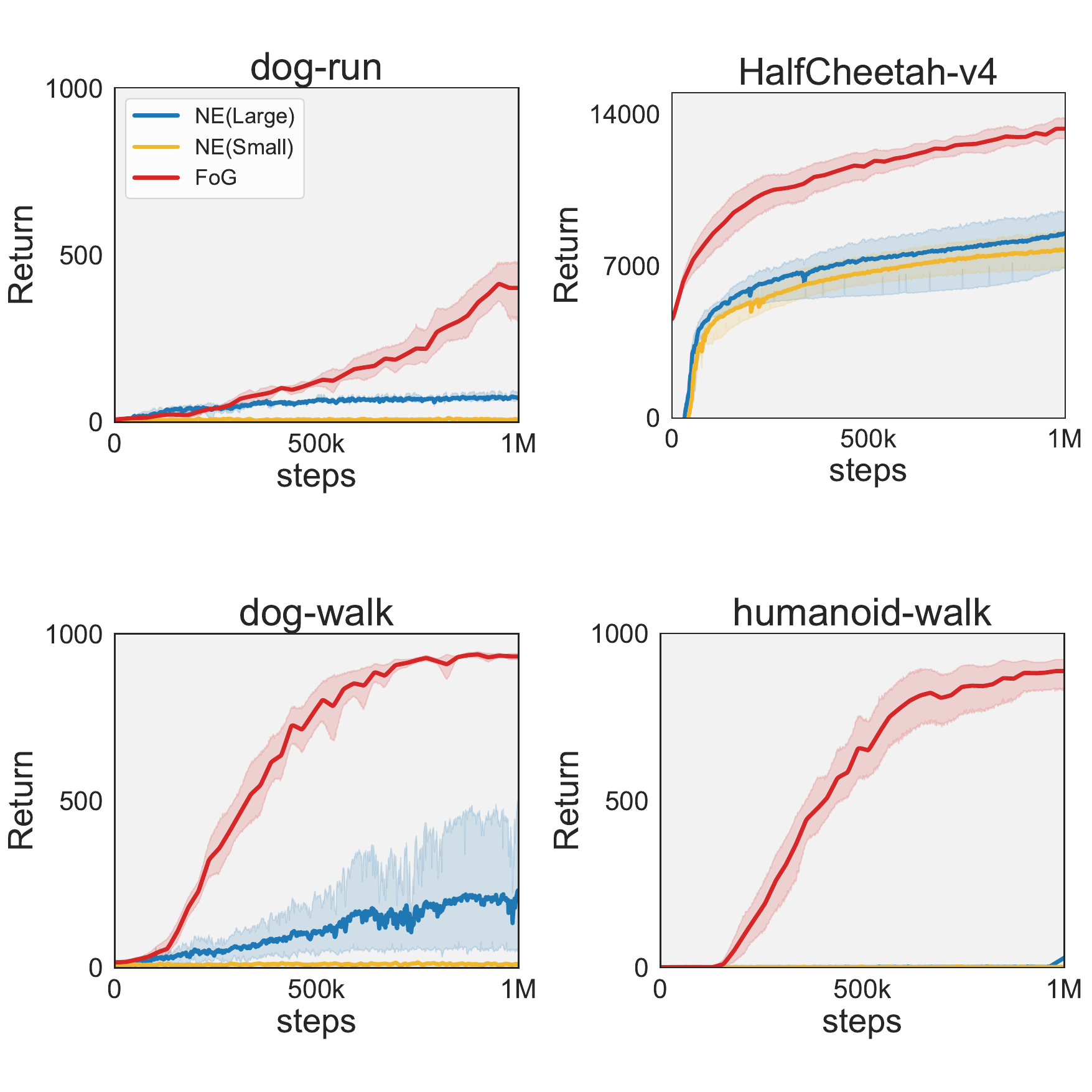}
    \caption{Comparison to Neuroplastic Expansion method}
    \label{fig:NE_compare}
\end{figure}

\end{document}